\documentclass{article}

 \usepackage[preprint]{neurips_2026}

\usepackage[utf8]{inputenc} 
\usepackage[T1]{fontenc}    
\usepackage[colorlinks=true, linkcolor=blue!70!black, citecolor=blue!60!black, urlcolor=blue!60!black]{hyperref}       
\usepackage{url}            
\usepackage{booktabs}       
\usepackage{amsfonts}       
\usepackage{nicefrac}       
\usepackage{microtype}      
\usepackage{xcolor}         
\usepackage{multirow}
\usepackage{graphicx}
\usepackage{amsmath}
\usepackage{amsthm}
\usepackage{cleveref}
\usepackage[most]{tcolorbox}
\usepackage{subcaption}  
\usepackage{tikz}
\usetikzlibrary{arrows.meta, positioning, calc, patterns, decorations.pathreplacing}

\newtheorem{proposition}{Proposition}
\newtheorem{remark}{Remark}

\newcommand{\KL}{\mathrm{KL}}
\newcommand{\E}{\mathbb{E}}
\newcommand{\given}{\,|\,}
\newcommand{\pto}{\propto}

\usepackage[scaled=0.9]{inconsolata}
\crefname{figure}{Fig.}{Figs.}

\tcbset{
  takeawaybox/.style={
    colback=purple!5,
    colframe=purple!75!black,
    boxrule=1pt,
    arc=4pt,
    left=6pt,
    right=6pt,
    top=6pt,
    bottom=6pt,
    enhanced
  }
}
\definecolor{m_green}{HTML}{2C8915}

\title{On-Policy Self-Distillation with Sampled Demonstrations Reduces Output Diversity}

\author{
  Andrei Liviu Nicolicioiu\textsuperscript{1,2,3,\dag}
  \And
  Mohammad Pezeshki\textsuperscript{3,*} 
  \And
  Aaron Courville\textsuperscript{1,2,4,*}
}

\begin{document}

\maketitle

\renewcommand{\thefootnote}{}
\footnotetext{%
\hspace{-0.5cm}\textsuperscript{1}\,Mila \quad
\textsuperscript{2}\,Universit\'e de Montr\'eal \quad
\textsuperscript{3}\,FAIR at Meta \quad
\textsuperscript{4}\,CIFAR AI Chair \\[2pt]
\textsuperscript{\dag}\,Correspondence: \texttt{andrei.nicolicioiu@mila.quebec} \quad
\textsuperscript{*}\,Equal advising
}
\renewcommand{\thefootnote}{\arabic{footnote}}

\begin{abstract}
On-policy self-distillation achieves strong pass@1 accuracy by using a single model as both teacher and student, with the teacher conditioned on a correct demonstration to provide dense token-level feedback. We show that this could come at a hidden cost: rollout diversity decreases and pass@k curves flatten (i.e., generating more rollouts fails to improve accuracy). We trace this to compounding biases in the design of self-distillation with sampled demonstrations. The teacher scores each student rollout while conditioned on a sampled correct rollout, channeling its feedback through the model's own biases. We theoretically analyze the optimal self-distillation policy and show that it tilts the base distribution by a pointwise conditional mutual information score between the student's rollout and the correct rollout used as context. Unlike the ideal optimal on-policy reinforcement learning (RL), which preserves probability ratios among equally correct rollouts, self-distillation can amplify existing probability gaps, concentrating mass on already-dominant modes. On a controlled graph path-finding task and science question-answering benchmarks, self-distilled models match or exceed RL on average performance but exhibit substantially lower \textit{functional} and \textit{semantic} diversity, failing on out-of-distribution settings that require diverse strategies.

\end{abstract}

\section{Introduction}

Current LLM post-training approaches to instill capabilities in models have different tradeoffs, with supervised fine-tuning (SFT) learning initial behaviors 
and on-policy RL methods refining and exploring new approaches \citep{zhang2025interplay_pre_mid_RL}.
In between, on-policy distillation \citep{rishab_on_policy_distill, onpolicydistillation_thinking_machines} uses a stronger teacher to guide a student using student-generated data.
Self-distillation goes further by eliminating the external teacher entirely: the same model, conditioned
on privileged information, such as a correct solution or environmental feedback, provides dense token-level feedback on the student's own generations.
Recent methods, including SDPO~\citep{sdpo}, SDFT~\citep{sdft_continual},  OPSD~\citep{opsd, penaloza2026privileged}, and OPCD~\citep{opcd}, instantiate this approach, achieving strong performance across several tasks such as scientific question-answering, continual learning tasks, and agentic tasks.

In this paper, we investigate a specific case, Self-Distillation with Sampled Demonstrations (SDSD) where student rollouts are guided by a teacher with demonstrations in its context. 
The demonstrations could come from correct student rollouts (exactly the setup of \citet{sdpo} with student demonstrations) or from external models \citep{opsd, penaloza2026privileged}.
Here, 
we find that good accuracy might come at a hidden cost. SDSD models with sampled demonstrations exhibit pass@k curves with small or nearly flat slopes (
Figures~\ref{fig:graph_results},~\ref{fig:sdpo_main_pass_k}): generating more rollouts fails to solve new problems. By contrast, models trained with on-policy RL (e.g., GRPO) show steep pass@k improvement, where each additional sample meaningfully increases problem coverage. SDSD could thus trade rollout diversity for average accuracy.

We argue that SDSD introduces compounding sources of bias that can progressively reduce rollout diversity. Intuitively, a rollout is \emph{aligned} with a demonstration when the two share more structural or stylistic features, causing the teacher conditioned on that demonstration to assign it higher probability and therefore reinforce it more strongly during training. This creates a bias toward solutions that resemble the sampled demonstrations. In particular, a teacher may struggle to effectively guide a correct but less typical rollout when conditioned on a more standard or canonical demonstration, simply because the two trajectories share fewer common patterns. As a result, distinctive yet valid solution strategies receive weaker learning signals. 
Over repeated training updates, this preference can compound: rollouts that are more aligned with previously sampled demonstrations become increasingly reinforced, while less aligned, but still correct, solutions are gradually suppressed (See Fig. \ref{fig:main_figure}). We hypothesize that this feedback loop contributes to the reduced rollout diversity and flattened pass@\(k\) scaling observed in SDSD models.

We formalize this effect by deriving the optimal self-distillation policy (Proposition~\ref{prop:main-sdkl-optimum}). The resulting policy is a tilted version of the base distribution, where the tilt is determined by the expected \emph{pointwise conditional mutual information} (PCMI) between a student rollout and the sampled demonstration. 
PCMI measures how much conditioning on a demonstration increases the model's preference for a particular rollout. Unlike a binary reward, which treats all correct rollouts equally, PCMI distinguishes among equally valid solutions and assigns greater weight to rollouts that are already more compatible with the demonstrations and the base policy. Consequently, self-distillation with sampled demonstrations can amplify existing probability imbalances: likely rollouts become increasingly likely, while less common but correct solutions are progressively suppressed, reducing rollout diversity.

To diagnose the reduced diversity, we use two notions of diversity beyond the commonly-used token-level entropy. \emph{Functional diversity} is the rate at which additional samples solve new problems, reflected in the slope of pass@k curves. \emph{Semantic diversity} measures whether rollouts differ in their high-level strategy (e.g., different paths through a graph, different proof approaches in math) rather than just surface-level wording. We show that token-level entropy fails to capture either notion (\S\ref{sec:entropy}).

\begin{figure}[t]
\centering
\begin{tikzpicture}[>=Stealth, every node/.style={font=\footnotesize}]

\definecolor{cOK}{HTML}{2E7D32}
\definecolor{cBad}{HTML}{999999}
\definecolor{cBlue}{HTML}{1565C0}
\definecolor{cOrange}{HTML}{D4760A}
\definecolor{cRed}{HTML}{C62828}

\def\colA{0}       
\def\colB{4.8}     
\def\colC{9.6}     
\def\colW{4.3}     


\begin{scope}[xshift=\colA cm]

\node[font=\footnotesize\bfseries] at (2.15, 0) {Regular RL (e.g., GRPO)};
\draw[black!40, thick] (0.0, -0.18) -- (\colW, -0.18);

\node[anchor=west] at (0.1, -0.55) {Question $x$ $\to$ Student generates:};

\node at (1.85, -1.0) {$y^1$};
\node at (2.4, -1.0)  {$y^2$};
\node at (2.95, -1.0) {$y^3$};
\node[cBad] at (3.5, -1.0) {$y^4$};

\node[cOK] at (1.85, -1.35) {\checkmark};
\node[cOK] at (2.4, -1.35)  {\checkmark};
\node[cOK] at (2.95, -1.35) {\checkmark};
\node[cBad] at (3.5, -1.35) {$\times$};

\foreach \x in {1.85, 2.4, 2.95, 3.5} {
    \draw[->, black!40] (\x, -1.5) -- (\x, -1.8);
}


\node at (1.85, -2.05) {$1$};
\node at (2.4, -2.05)  {$1$};
\node at (2.95, -2.05) {$1$};
\node[cBad] at (3.5, -2.05) {$0$};

\node[anchor=west, font=\scriptsize, black!55] at (0.1, -2.45) {Verifier: same reward for all correct};

\draw[black!15, densely dotted] (0.0, -2.75) -- (\colW, -2.75);

\node[anchor=west, font=\scriptsize] at (0.1, -3.05) {scores every correct $y^i$:};

\node[anchor=west, font=\scriptsize] at (0.1, -3.4) {$y^1$:};
\fill[cOK!55, rounded corners=1pt] (0.6, -3.5) rectangle (2.1, -3.32);
\node[anchor=west, font=\scriptsize, cOK!70!black] at (2.2, -3.4) {equal};

\node[anchor=west, font=\scriptsize] at (0.1, -3.75) {$y^2$:};
\fill[cOK!55, rounded corners=1pt] (0.6, -3.85) rectangle (2.1, -3.67);
\node[anchor=west, font=\scriptsize, cOK!70!black] at (2.2, -3.75) {equal};

\node[anchor=west, font=\scriptsize] at (0.1, -4.1) {$y^3$:};
\fill[cOK!55, rounded corners=1pt] (0.6, -4.2) rectangle (2.1, -4.02);
\node[anchor=west, font=\scriptsize, cOK!70!black] at (2.2, -4.1) {equal};

\draw[black!15, densely dotted] (0.0, -4.45) -- (\colW, -4.45);

\node[anchor=west, font=\scriptsize] at (0.1, -4.75) {All correct reinforced \textbf{equally}};

\end{scope}


\begin{scope}[xshift=\colB cm]

\node[font=\footnotesize\bfseries] at (2.15, 0) {Self-Distillation (SDSD)};
\draw[black!40, thick] (0.0, -0.18) -- (\colW, -0.18);

\node[anchor=west] at (0.1, -0.55) {Question $x$ $\to$ Student generates:};

\node at (1.85, -1.0) {$y^1$};
\node at (2.4, -1.0)  {$y^2$};
\node at (2.95, -1.0) {$y^3$};
\node[cBad] at (3.5, -1.0) {$y^4$};

\node[cOK] at (1.85, -1.35) {\checkmark};
\node[cOK] at (2.4, -1.35)  {\checkmark};
\node[cOK] at (2.95, -1.35) {\checkmark};
\node[cBad] at (3.5, -1.35) {$\times$};

\draw[->, cBlue, thick] (1.85, -1.5) -- (1.85, -2.5);
\node[anchor=west, font=\scriptsize, cBlue!70!black] at (0.1, -1.75) {\textit{$y^1$ selected for}};
\node[anchor=west, font=\scriptsize, cBlue!70!black] at (0.1, -2.05) {\textit{teacher context}};

\node[draw, rounded corners=2pt, fill=cOrange!6, inner sep=3pt] (tbox) at (2.15, -2.85) {Teacher: $p(y_{t}^i \mid x, y_{<t}^i, y^1)$};

\node[anchor=west, font=\scriptsize] at (0.1, -3.55) {reinforces every rollout \textbf{differently}:};

\node[anchor=west, font=\scriptsize] at (0.1, -3.9) {$y^1$:};
\fill[cOrange!80, rounded corners=1pt] (0.6, -4.0) rectangle (2.4, -3.82);
\node[anchor=west, font=\scriptsize, cOrange!80!black] at (2.5, -3.9) {strong};

\node[anchor=west, font=\scriptsize] at (0.1, -4.25) {$y^2$:};
\fill[cOrange!50, rounded corners=1pt] (0.6, -4.35) rectangle (1.5, -4.17);
\node[anchor=west, font=\scriptsize, cOrange!55!black] at (1.6, -4.25) {some};

\node[anchor=west, font=\scriptsize] at (0.1, -4.6) {$y^3$:};
\fill[cOrange!22, rounded corners=1pt] (0.6, -4.7) rectangle (0.85, -4.52);
\node[anchor=west, font=\scriptsize, black!40] at (0.95, -4.6) {little};

\end{scope}

\begin{scope}[xshift=\colC cm]

\node[font=\footnotesize\bfseries] at (2.15, 0) {Functional Diversity: pass@[$1 \rightarrow k$]};
\draw[black!40, thick] (0.0, -0.18) -- (\colW, -0.18);

\node[anchor=north west, inner sep=0] at (0.0, -0.35) {%
    \includegraphics[width=\colW cm]{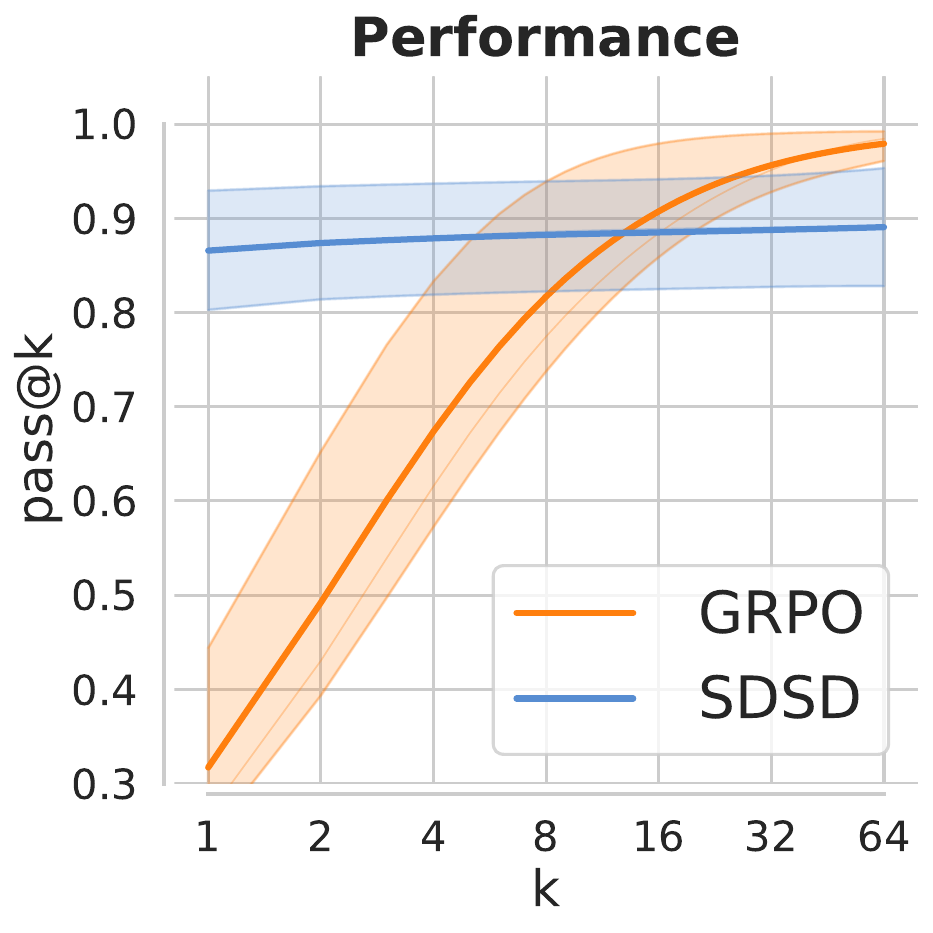}%
};

\end{scope}

\end{tikzpicture}
\caption{
RL and self-distillation treat correct rollouts differently, with consequences for rollout diversity.
\textbf{Left:} A binary verifier gives equal reward to all correct rollouts, so RL reinforces them uniformly.
\textbf{Middle:} Self-distillation conditions the teacher on a sampled correct rollout, typically the most probable one, so the teacher's feedback is strongest for similar rollouts and weakest for those taking a different approach.
\textbf{Right:} 
On a Graph Path finding task,
this gap manifests as flatter pass@$k$ curves for  Self-Distillation with Sampled Demonstrations (SDSD): generating more rollouts fails to solve new problems, unlike GRPO where each additional sample meaningfully increases coverage.
}
\label{fig:main_figure}
\end{figure}

Our contributions are:
\begin{itemize}
    \item We prove that the optimal policy of self-distillation with sampled demonstrations tilts the base distribution by expected PCMI rather than reward, and that this can amplify probability gaps among equally correct rollouts, a property absent from standard RL (Prop.~\ref{prop:main-sdkl-optimum}, Remark~\ref{rem:main-sdkl-ratio}).
    \item We introduce a graph path-finding task, in which \textit{semantic} diversity, the number of distinct concept categories explored, is precisely measurable and directly predicts out-of-distribution generalization (\S\ref{sec:concept-graph}, Fig.~\ref{fig:graph_results}). 
    \item On graph path-finding and science QA tasks \citep{sciknoweval}, we show that self-distillation with sampled demonstrations achieves competitive pass@1 but substantially lower \textit{functional} and \textit{semantic} diversity than RL, failing on out-of-distribution tasks that require diverse strategies (
    Fig.~\ref{fig:graph_results},~\ref{fig:sdpo_main_pass_k}).
\end{itemize}

\section{Background: Self-Distillation with Sampled Demonstrations}
\label{sec:self-distillation}

We review the self-distillation with sampled demonstration framework of~\citet{sdpo}, which uses a correctly verified rollout as privileged context for the teacher.
For each question $x$, the student policy generates a group of $N$ rollouts $y_n \sim \pi_\theta(\cdot \mid x)
$,
$
\mathcal{Y}(x) = \{y_1, \dots, y_N\}$.

Let $\mathrm{C}(x) \subseteq \mathcal{Y}(x)$ denote the subset verified as correct.
For each rollout $y \in \mathcal{Y}(x)$, a correct rollout $y^{\mathrm{corr}} \in \mathrm{C}(x)$ is sampled uniformly from this subset and provided to the teacher as context.

The teacher is an exponential moving average (EMA) of the student with parameters $\bar{\theta}$, conditioned on the question $x$ and the correct rollout $y^{\mathrm{corr}}$:
$\pi_{\bar{\theta}}(\cdot \mid x, y^{\mathrm{corr}}, y_{<t})$.
The training objective minimizes the token-level KL divergence between the student and this context-conditioned teacher:
\[
\mathcal{L}_{\mathrm{SD}}(\theta; x)
=
\frac{1}{N}
\sum_{y \in \mathcal{Y}(x)}
\mathbb{E}_{y^{\mathrm{corr}} \sim \mathrm{C}(x)}
\left[
\sum_{t=1}^{|y|}
\mathrm{KL}\!\left(
\pi_\theta(\cdot \mid x, y_{<t})
\,\middle\|\,
\mathrm{sg}\!\left[\pi_{\bar{\theta}}(\cdot \mid x, y^{\mathrm{corr}}, y_{<t})\right]
\right)
\right],
\]
where $\mathrm{sg}[\cdot]$ denotes stop-gradient. The corresponding token-level gradient~\citep{sdpo} is:
\[
\nabla_\theta \mathcal{L}_{\mathrm{SD}}(\theta; x)
=
\frac{1}{N}
\sum_{y \in \mathcal{Y}(x)}
\mathbb{E}_{y^{\mathrm{corr}}}
\left[
\sum_{t=1}^{|y|}
\mathbb{E}_{\hat{y}_t \sim \pi_\theta}
\left[
\log
\frac{
\pi_\theta(\hat{y}_t \mid x, y_{<t})
}{
\pi_{\bar{\theta}}(\hat{y}_t \mid x, y^{\mathrm{corr}}, y_{<t})
}
\;
\nabla_\theta \log \pi_\theta(\hat{y}_t \mid x, y_{<t})
\right]
\right].
\]

Self-distillation thus replaces a single scalar reward for the full sequence with a dense per-token correction signal derived from a context-conditioned version of the model itself.

\paragraph{On-policy RL is mode-seeking, SDSD is even more.}
Any on-policy method exhibits mode-seeking behavior concentrating its rollouts into a smaller subset \citep{llm_on_policy, on_policy_forgetting}. SDSD as a reverse KL objective~\citep{bishop2006pattern} has the same behavior
but even more pronounced due to \textit{additional compounding biases}.
In our setting, self-distillation involves two crucial samplings: that of the student rollout and that of the demonstration (either from the student's correct rollouts or from an external source).
The
\textit{alignment between these two} introduces a bias towards common responses. 
Consider how two student rollouts with equal reward, one common and one highly novel, are treated. It is more likely to sample a demonstration that resembles the common rollout. Then, the teacher has a bias to give more probability to rollouts that are similar to its context, meaning similar to the demonstration. Together this will lead to the common rollout being upweighted more than the unique one.
This leads to a rich-get-richer (likely-get-likelier)
behavior stronger than in standard RL, arising from both double sampling and its alignment and the preferences of the teacher.

Moreover, while all on-policy methods exhibit mode-seeking behavior due to \textit{optimization}, self-distillation uniquely introduces incentives for loss of diversity even at the level of the \textit{optimal policy.}

\section{Optimal Policy of Self-Distillation}
\label{sec:theory}
We derive the optimal self-distillation policy and characterize how it differs from standard RL. For ease of presentation, we use a sequence level objective first, and refer to the Appendix~\ref{app:detailed-derivations} for full derivations and token-level presentation \cref{app:token_level}. 
 Let $x$ denote the input prompt, $y$ an output sequence, and $\pi_0(y \given x)$ the base policy. We optimize a student policy $\pi(y \given x)$. All distributions are assumed strictly positive on their support.

\begin{proposition}[Optimal policy for standard KL-regularized RL]
\label{prop:main-rl-optimum}
The standard RL objective is:
\begin{equation}
\label{eq:main-rl-objective}
\max_{\pi} \; \E_{y \sim \pi(\cdot \given x)}
\left[
R(y \given x)
\right]
- \beta_{\mathrm{RL}} \, \KL\!\left(\pi(\cdot \given x) \,\|\, \pi_0(\cdot \given x)\right).
\end{equation}
It is well known~\citep{korbak2022rl_kl, rafailov2023_dpo} that the optimal policy of this objective is the following tilted distribution:
\begin{equation}
\label{eq:main-rl-optimum}
\pi^*_{\mathrm{RL}}(y \given x)
\pto
\pi_0(y \given x)
\exp\!\left( \frac{1}{\beta_{\mathrm{RL}}} R(y \given x) \right).
\end{equation}
\end{proposition}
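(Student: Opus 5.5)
The plan is the standard Gibbs variational argument: rewrite the objective in \eqref{eq:main-rl-objective} as a single negative KL divergence plus a constant that does not depend on $\pi$. Fix $x$ and define the normalizer
\[
Z(x) = \sum_{y} \pi_0(y \given x) \exp\!\left(\tfrac{1}{\beta_{\mathrm{RL}}} R(y \given x)\right).
\]
Let $\pi^*_{\mathrm{RL}}(y \given x) = \pi_0(y \given x)\exp\!\left(R(y \given x)/\beta_{\mathrm{RL}}\right)/Z(x)$. Since $\pi_0$ is strictly positive on its support, this is a valid distribution as long as $Z(x) < \infty$. That holds automatically if the output space is finite or if $R$ is bounded, e.g.\ a binary verifier reward. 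I will state this as a standing assumption.

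Dividing the objective by $\beta_{\mathrm{RL}} > 0$ and expanding the KL term gives
\[
\E_{y\sim\pi}\!\left[\tfrac{1}{\beta_{\mathrm{RL}}}R(y\given x) - \log\tfrac{\pi(y\given x)}{\pi_0(y\given x)}\right]
= \E_{y\sim\pi}\!\left[\log\tfrac{\pi_0(y\given x)\,e^{R(y\given x)/\beta_{\mathrm{RL}}}}{\pi(y\given x)}\right]
= \log Z(x) - \KL\!\left(\pi(\cdot\given x)\,\|\,\pi^*_{\mathrm{RL}}(\cdot\given x)\right).
\]
The last equality substitutes $\pi_0 e^{R/\beta_{\mathrm{RL}}} = Z(x)\,\pi^*_{\mathrm{RL}}$ and uses that $\log Z(x)$ is constant in $y$.

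Maximizing the objective over $\pi$ is therefore the same as minimizing $\KL(\pi\,\|\,\pi^*_{\mathrm{RL}})$. By Gibbs' inequality this KL is nonnegative and equals zero exactly when $\pi = \pi^*_{\mathrm{RL}}$. So the maximizer is unique and equals \eqref{eq:main-rl-optimum}, and the optimal value is $\beta_{\mathrm{RL}} \log Z(x)$.

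The only step that needs care is the support/finiteness bookkeeping. Any $\pi$ that puts mass outside the support of $\pi_0$ has infinite $\KL(\pi\,\|\,\pi_0)$, so it can be discarded. On that support all the logarithms above are finite, and the decomposition is legitimate once $Z(x)<\infty$.

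As an alternative check, one could write a Lagrangian with a normalization multiplier and set the derivative with respect to $\pi(y\given x)$ to zero:
\[
R(y\given x) - \beta_{\mathrm{RL}}\left(\log\tfrac{\pi(y\given x)}{\pi_0(y\given x)} + 1\right) - \lambda = 0.
\]
This gives the same exponential tilt directly. However, the KL rewriting is preferable because it also establishes global optimality and uniqueness, which follow from the concavity of the objective in $\pi$.
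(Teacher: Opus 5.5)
Your proof is correct, but it takes a different route from the paper's. The paper (Appendix~\ref{app:rl-derivation}) uses what you relegate to an ``alternative check.'' It forms the Lagrangian with only a normalization multiplier, sets the derivative with respect to $\pi(y)$ to zero, obtains $\log\pi(y)=\log\pi_0(y)+R(y)/\beta_{\mathrm{RL}}+c$, and then exponentiates and normalizes. That identifies a stationary point, but it leaves two things implicit: why this point is the global maximizer (strict concavity of the objective, from the $\pi\log\pi$ term), and why the omitted constraints $\pi(y)\ge 0$ do not matter (the stationary point is strictly positive on the support of $\pi_0$).

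Your completing-the-KL identity, objective $=\beta_{\mathrm{RL}}\bigl(\log Z(x)-\KL(\pi\,\|\,\pi^*_{\mathrm{RL}})\bigr)$, gives existence, global optimality, uniqueness, and the optimal value $\beta_{\mathrm{RL}}\log Z(x)$ in one step via Gibbs' inequality. The cost is that you must state $Z(x)<\infty$ and enough integrability of $R$ to split the expectation; your finite-output-space or bounded-reward assumption covers both.

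The paper's stationarity computation has the advantage of being reused verbatim for the SDSD-KL optimum (Proposition~\ref{prop:sdkl-optimum}). Your argument transfers just as easily: that objective equals $(1+\beta)\,\KL(\pi\,\|\,\tilde\pi)$ plus a $\pi$-independent constant, where $\tilde\pi(y)\propto\pi_0(y\given x)\exp\bigl(\tfrac{1}{1+\beta}\E_{y^{\mathrm{corr}}}[i(y;y^{\mathrm{corr}}\mid x)]\bigr)$.

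One small wording fix: in your final sentence, global optimality and uniqueness in your main argument come from Gibbs' inequality, not from concavity. Concavity is what one needs to upgrade the Lagrangian route to a global statement.
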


The optimal RL policy consists of the base policy modulated by the reward. Thus, two rollouts with the same probability under the base and the same reward will be \textit{as likely} under the optimal policy.

We now derive the analogous result for self-distillation. We take the teacher to be the fixed base policy conditioned on a correct demonstration, and optimize the student to minimize the reverse KL with the teacher, regularized by a KL penalty to the base model.

In practice, explicit KL regularization to the base policy is not always used in RL or self-distillation. However, since training starts from the base policy and models are rarely trained to convergence, the 
policy typically remains close in KL to its starting point, making this a reasonable modeling choice.

Let $p_{\mathrm{corr}}(\cdot \given x)$ denote a reference distribution over correct demonstrations for input $x$, from which $y^{\mathrm{corr}}$ is sampled. In practice, this can be the empirical distribution over $\mathrm{C}(x)$, the student's own correct rollouts (\S\ref{sec:self-distillation}), or a distribution over external demonstrations (\S\ref{sec:demos_diversity}).

Let us define the following objective that takes the expectation over demonstrations.

\begin{proposition}[Optimal policy for SDSD-KL]
\label{prop:main-sdkl-optimum}
Let the self-distillation + KL objective:
\begin{equation}
\label{eq:main-sdkl-objective}
\min_{\pi} \; \E_{y^{\mathrm{corr}} \sim p_{\mathrm{corr}}(\cdot \given x)}
\Big[
\KL\!\left(\pi(\cdot \given x) \,\|\, 
\pi_0(y \given x, y^{\mathrm{corr}})
\right)
\Big]
+ \beta \, \KL\!\left(\pi(\cdot \given x) \,\|\, \pi_0(\cdot \given x)\right).
\end{equation}
The optimal policy of \eqref{eq:main-sdkl-objective} is

\begin{equation}
\label{eq:main-sdkl-optimum}
\pi^*_{\mathrm{SD\text{-}KL}}(y \given x)
\pto
\pi_0(y \given x)
\exp\!\left(
\frac{1}{1+\beta}
\, \E_{y^{\mathrm{corr}} \sim p_{\mathrm{corr}}(\cdot \given x)}
\big[
 i(y; y^{\mathrm{corr}} \mid x)
\big]
\right),
\end{equation} where the pointwise conditional mutual information (PCMI) is defined as:
\begin{equation}
\label{eq:main-pcmi}
i(y; y^{\mathrm{corr}} \mid x)
:=
\log \frac{\pi_0(y \given x, y^{\mathrm{corr}})}{\pi_0(y \given x)}.
\end{equation}
\end{proposition}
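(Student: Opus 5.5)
The plan is to expand both KL terms into expectations under $\pi$, rewrite the objective as a single KL divergence to a normalized tilted distribution, and read off the minimizer from nonnegativity of KL; this is the same route that gives Proposition~\ref{prop:main-rl-optimum}. By linearity, and by Fubini (the demonstration does not depend on $y$), the first term equals
\[
\E_{y \sim \pi}\Big[\log \pi(y \given x) - \E_{y^{\mathrm{corr}} \sim p_{\mathrm{corr}}}\big[\log \pi_0(y \given x, y^{\mathrm{corr}})\big]\Big].
\]
The second term is $\beta\,\E_{y\sim\pi}[\log \pi(y \given x) - \log \pi_0(y \given x)]$. Adding them gives
\[
(1+\beta)\,\E_{y\sim\pi}\Big[\log \pi(y \given x) - \tfrac{1}{1+\beta}\,\E_{y^{\mathrm{corr}}}\big[\log \pi_0(y\given x,y^{\mathrm{corr}})\big] - \tfrac{\beta}{1+\beta}\log\pi_0(y\given x)\Big].
\]

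Next we introduce the PCMI from \eqref{eq:main-pcmi}. Write $\log \pi_0(y\given x,y^{\mathrm{corr}}) = \log\pi_0(y\given x) + i(y;y^{\mathrm{corr}}\mid x)$. The two $\log\pi_0(y\given x)$ coefficients, $\tfrac{1}{1+\beta}$ and $\tfrac{\beta}{1+\beta}$, sum to one. The bracket therefore becomes
\[
\log\pi(y\given x) - \log\pi_0(y\given x) - \tfrac{1}{1+\beta}\,\bar\imath(y), \qquad \bar\imath(y) := \E_{y^{\mathrm{corr}}}\big[i(y;y^{\mathrm{corr}}\mid x)\big].
\]
Define $\tilde\pi(y\given x) := \pi_0(y\given x)\exp\big(\bar\imath(y)/(1+\beta)\big)/Z(x)$. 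The objective then equals $(1+\beta)\big[\KL(\pi\,\|\,\tilde\pi) - \log Z(x)\big]$. Since $\log Z(x)$ does not depend on $\pi$ and $1+\beta>0$, Gibbs' inequality shows the unique minimizer is $\pi=\tilde\pi$, which is \eqref{eq:main-sdkl-optimum}. Equivalently, one can minimize the objective over the simplex with a Lagrange multiplier and set the pointwise derivative to zero, obtaining $(1+\beta)(\log\pi+1) - \E[\log\pi_0(y\given x,y^{\mathrm{corr}})] - \beta\log\pi_0 + \lambda = 0$. Completing the square as above is cleaner.

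The main obstacle is well-definedness rather than the algebra. We need $Z(x) = \sum_y \pi_0(y\given x)\exp(\bar\imath(y)/(1+\beta)) < \infty$. We also need the interchange of expectations to be legitimate, which requires $\E_{y^{\mathrm{corr}}}[\log\pi_0(y\given x,y^{\mathrm{corr}})]$ to be finite. Both follow from the standing strict-positivity assumption when the output space is finite, as it is for bounded-length sequences over a finite vocabulary. For $Z$ there is also a direct bound: by Jensen, $\exp(\bar\imath/(1+\beta)) \le \E_{y^{\mathrm{corr}}}\big[(\pi_0(y\given x,y^{\mathrm{corr}})/\pi_0(y\given x))^{1/(1+\beta)}\big]$. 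Multiplying by $\pi_0(y\given x)$ and summing gives $\sum_y \pi_0(y\given x)^{\beta/(1+\beta)}\pi_0(y\given x,y^{\mathrm{corr}})^{1/(1+\beta)} \le 1$ by Hölder, so $Z(x)\le 1$. I would state this bound explicitly and otherwise assume finiteness of the cross-entropy terms.
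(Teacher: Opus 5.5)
Your proof is correct, but its central step differs from the paper's. The paper proves both this proposition and Proposition~\ref{prop:main-rl-optimum} by the Lagrangian route. It expands the objective exactly as you do, adds a multiplier for the normalization constraint only, and sets the pointwise derivative to zero, which gives precisely the stationarity equation you mention as your alternative. It then solves for $\log\pi(y)$, substitutes $\log\pi_0(y\given y^{\mathrm{corr}})=\log\pi_0(y)+i(y;y^{\mathrm{corr}})$, and normalizes.

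You instead complete the objective into $(1+\beta)\bigl[\KL(\pi\,\|\,\tilde\pi)-\log Z(x)\bigr]$ and read off the minimizer from Gibbs' inequality. This gives global optimality, uniqueness, and the optimal value $-(1+\beta)\log Z(x)$ in one step. The paper's stationarity computation only locates a critical point. Concluding that it is the minimizer tacitly requires strict convexity of the objective, which comes from the $(1+\beta)\sum_y\pi(y)\log\pi(y)$ term, and requires that point to lie in the interior of the simplex. The paper states neither fact.

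Your Jensen--H\"older bound $Z(x)\le 1$ (valid for $\beta\ge 0$) also settles normalizability. The paper leaves this implicit, and it is automatic in its finite, strictly positive setting. The bound is consistent with the optimal value $-(1+\beta)\log Z(x)$ being nonnegative, as a sum of KL terms must be.

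The paper's derivation is more mechanical and mirrors its treatment of RL. Yours is the tighter argument, and it extends more readily beyond finite output spaces once the cross-entropy terms are assumed finite.
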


The RL optimal policy~\eqref{eq:main-rl-optimum} tilts the base policy by the reward. The self-distillation optimum instead tilts by the expected PCMI~\eqref{eq:main-pcmi}: a log-ratio measuring how much more likely the teacher finds $y$ after conditioning on a demonstration. When $y^{\mathrm{corr}}$ is relevant and supports $y$, the PCMI is positive; when $y^{\mathrm{corr}}$ is contradictory to $y$, it is negative. The base policy is thus tilted not by task reward, but by the teacher's assessment of how well each rollout aligns with correct demonstrations.

\begin{remark}[Ratio for two correct sequences under SDSD-KL]
\label{rem:main-sdkl-ratio}
Let $y_1$ and $y_2$ be two correct sequences for the same input $x$, and suppose $\pi_0(y_1 \given x)=k \, \pi_0(y_2 \given x) $
for some $k\geq1$. Then
\begin{equation}
\label{eq:main-sdkl-ratio}
\frac{\pi^*_{\mathrm{SD\text{-}KL}}(y_1 \given x)}{\pi^*_{\mathrm{SD\text{-}KL}}(y_2 \given x)}
=
 k \, \exp\!\left(
\frac{1}{1+\beta}
\E_{y^{\mathrm{corr}} \sim p_{\mathrm{corr}}(\cdot \given x)}
\big[
 i(y_1; y^{\mathrm{corr}} \mid x)-i(y_2; y^{\mathrm{corr}} \mid x)
\big]
\right).
\end{equation}
\end{remark}
\paragraph{When does sharpening occur?} SDSD-KL preserves the base-policy ratio $k$ only when the two sequences have the same expected PCMI. If the demonstrations $y^\mathrm{corr}$ support on average $y_1$ more than $y_2$, the ratio between the two rollouts becomes even larger under self-distillation, leading to sharpening where likely rollouts become even more likely. This contrasts with the RL optimal policy~\eqref{eq:main-rl-optimum}, which maintains the initial ratio of $k$ when both rollouts are equally correct, since the reward tilt cancels out.
This shows sharpening occurs when the expected PCMI is higher for the already-probable rollout. 
Such rollouts are more likely to be aligned to the demonstrations, and the teacher shares the same biases as the student, thus this becomes highly likely. The same derivations and implications carry over to the token-level objective, resulting in a bias for sharpening the distribution of the next-token, leading to loss of diversity of the whole rollout (see Appendix \ref{app:token_level}).

All on-policy learning methods, like GRPO or self-distillation have mode-seeking behavior due to \textit{optimization}, but the previous remark shows that self-distillation has an \textit{optimal} policy that can be sharper than the initial one.

\begin{figure}[t]
    \centering
    \includegraphics[width=1.0\linewidth]{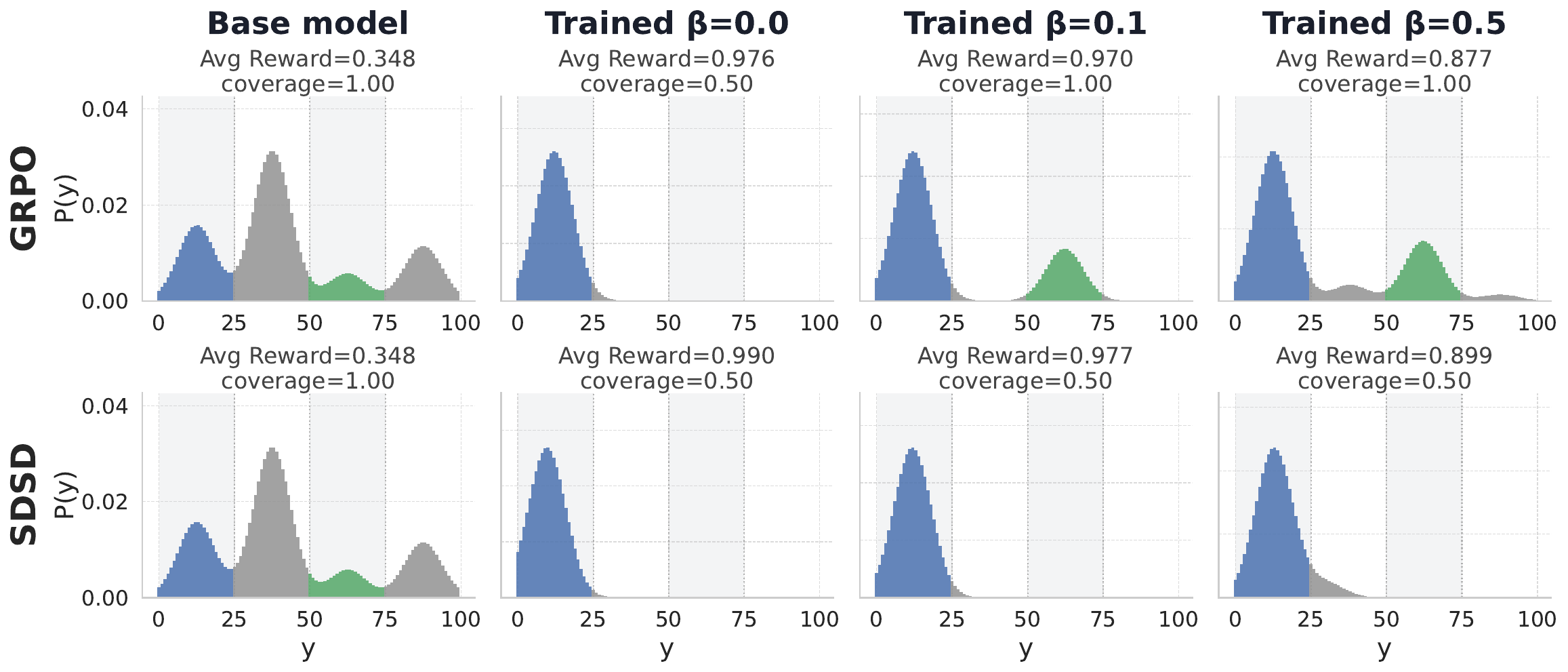}
    \caption{
    Illustrative example: SDSD collapses to a single high-reward mode, regardless of KL regularization,  while GRPO with KL regularization > 0 covers both modes. The environment has two equally-valued reward regions, so an ideal policy would maintain coverage of both. 
    }
    \label{fig:toy_results_theory}
\end{figure}

\subsection{Illustrative Example: Mode Collapse Under Self-Distillation}

We verify the sharpening predicted by our theory in a minimal controlled environment. The action space is $D{=}100$ discrete actions split into four quarters: the first and third quarters are rewarded, while the other two are not. An ideally diverse policy would place mass on both rewarded modes.

We parameterize the student policy $\pi_\theta$ over a four-bump base distribution. The teacher for SDSD is constructed as
$\pi_{\bar\theta}(y \mid y^{\mathrm{corr}}) \propto \pi_\theta(y) \cdot K(y, y^{\mathrm{corr}})$,
where $K$ is a Gaussian kernel centered on a correct sample $y^{\mathrm{corr}}$ drawn from the student's own correct outputs. This locally upweights the student's mass near each observed correct sample, exactly the mechanism that produces PCMI sharpening in our theory.

Figure~\ref{fig:toy_results_theory} shows the result. GRPO recovers both rewarded modes for any $\beta>0$. SDSD collapses to whichever rewarded region the base policy slightly favors and stays there for every $\beta$: the more probable region produces more correct samples, those samples become teacher contexts, and the teacher's kernel-shaped feedback reinforces nearby points, a self-reinforcing loop that no level of KL regularization to the base undoes.

\section{Experiments}

\subsection{Concept-Graph Setting: Loss of Semantic Diversity and OOD Performance}
\label{sec:concept-graph}

We design a controlled setting, challenging for LLMs, with a precise definition of \textit{semantic diversity} and a direct link between diversity and downstream performance.

\paragraph{Controlled experimental setting.}

We introduce a graph path-finding task, where we generate multiple graphs and create a query for each one. For each query, we prompt an LLM with a representation of the graph in context and ask it to generate a path between two points. See \cref{app:concept_graph} for an example of such query. A graph node represents an instance of a named concept (e.g., specific birds: heron, pigeon; fruits: orange, cherry) as seen in Fig. \ref{fig:concept_graph_example}. The graph has a star structure: a central start node connects to multiple \emph{concept chains}, each consisting of nodes from the same concept  (e.g., all birds or all fruits), each ending at a shape node (e.g., diamond, square). Two of the endpoints have the same name and represent the target; the remaining two are distractors. Multiple valid paths to the targets exist, each passing through a different concept chain. Training graphs are biased: some paths to the target are short ($11$ nodes, easier), while others are long ($15$ nodes, harder), creating an incentive for models to exploit easy routes and ignore harder but equally valid alternatives.
  
To check the robustness of the learned models, we evaluate on an \textit{in-distribution} test set and two out-of-distribution datasets: a \textit{larger-graphs} dataset, in which all paths to end nodes have a fixed length of 20, and a \textit{harder-graphs} dataset, in which all paths have fixed length of 11, but one edge is removed from chains leading to one of the two target nodes, giving fewer valid solutions.

\paragraph{Baselines.}

We train Qwen3-1.7B~\citep{yang2025qwen3} models using variants of GRPO and SDSD, on a dataset of 16k graphs for training and 128 for testing. We train for 1000 steps, with a mini-batch size of 16 queries, 4 rollouts per query, and a maximum generation length of 8,192 tokens. The experiments are trained on a single GPU using the library of \citet{NanoAhaMoment2025}.

\paragraph{GRPO.}
  Standard GRPO~\citep{shao2024_deepseekmath_grpo} serves as the primary baseline. At each iteration, the policy model generates N=4 rollouts per query. Within each group of N rollouts, a scalar reward is used to compute a scalar advantage for the whole sequence. 

\paragraph{GRPO+diversity.}
\looseness=-1
  GRPO+diversity adds a diversity reward to the score reward to encourage the model to explore different concept chains across its $N$ rollouts. 
  For each rollout, its diversity score is the fraction of the other $N-1$ rollouts in the same group that used a disjoint set of concepts. A rollout using different concepts than all the other rollouts of the same query, will have a diversity score of 1. 
  This relates to \citet{li2025_darling}, 
  who multiply the score reward by a clustering-based diversity score.

\paragraph{SDSD.} SDSD implementation following SDPO~\citep{sdpo},
where for each student rollout, the teacher is conditioned on another correct student rollout for the same query.

All models add to the main loss a KL regularization to the reference model.

The results in Fig. \ref{fig:graph_results} show that SDSD achieves good in-distribution pass@1 and the best pass@1 on the Larger Graph setting. However, its pass@k performance increases very slowly, or not at all for the harder settings. This flat pass@k curve indicates low \textit{functional diversity}: successive samples rarely solve new queries. The Harder Graph setting can only be solved by models that learned diverse rollouts during training. SDSD's failure there confirms that it relied exclusively on easy routes.

\begin{figure}[t]
    \centering
    \begin{subfigure}{0.58\textwidth}
        \centering
        \includegraphics[width=\linewidth]{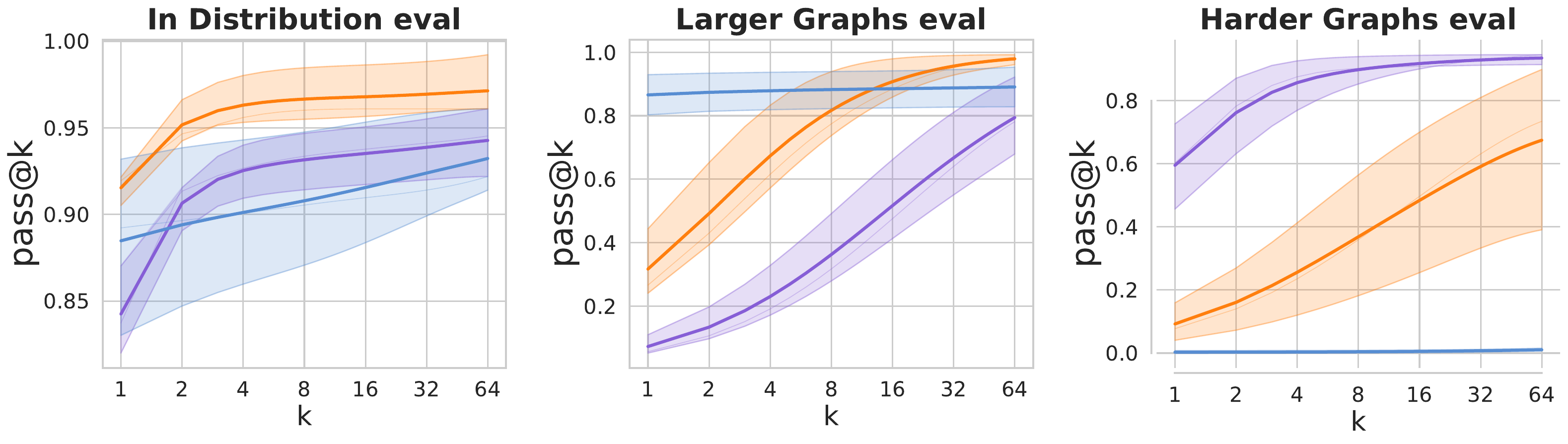}
        \label{fig:graph_results_passk}
    \end{subfigure}
    \hfill
    \begin{subfigure}{0.4\textwidth}
        \centering
        \includegraphics[width=\linewidth]{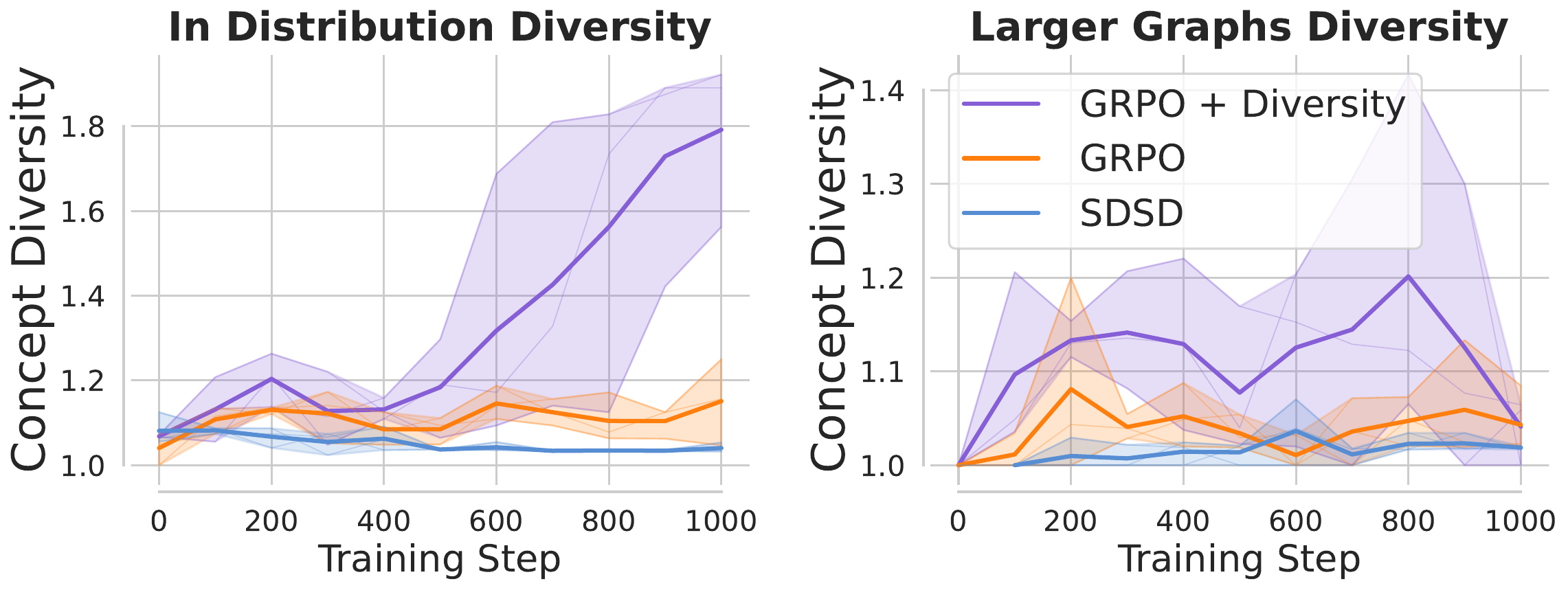}
        \label{fig:graph_results_diversity}
    \end{subfigure}
    \caption{
    In-distribution and Larger Graphs evaluations show SDSD has good pass@1 performance but, toward pass@k, the curve has a small slope highlighting low \textit{functional diversity}. 
    The third setup requires the model to have learned diverse rollouts during training, and is completely unsolved by self-distillation. Additionally, the last two figures show that the explicitly defined \textit{semantic diversity} of SDSD is the lowest. All runs train Qwen3-1.7B; mean and min/max runs with 3 seeds are shown.}
    \label{fig:graph_results}
\end{figure}

\paragraph{Semantic diversity.}
We seek rollouts with \textit{semantic diversity}, capturing meaningful variations in their trajectories, such as different high-level strategies or approaches.  In mathematical reasoning, for instance, one might want geometric vs.\ algebraic approaches, or different theorems. In the graph setting, we define the \emph{semantic diversity} of a set of rollouts as the number of unique concepts present across all of them.
This measures whether a model explores fundamentally different strategies (e.g., following animal chains vs.\ flower chains) rather than mere surface-level token variation.
For each query in the in-distribution and larger graphs testsets, we sample 64 rollouts and compute the average number of unique concepts of the nodes present in them, and show these diversity scores across training in Fig. \ref{fig:graph_results}. We find that GRPO has relatively low semantic diversity, but adding a diversity reward significantly improves it. On the other hand, SDSD has the lowest semantic diversity scores, which are correlated with the low functional diversity (slope of pass@k) and low scores in the Harder Graphs dataset that requires diversity.

\subsection{Science QA: Functional Diversity in Practice}
\label{sec:qa}

We highlight the interplay of accuracy and diversity of SDSD and GRPO in science QA settings. 
\paragraph{Setup.} We evaluate on four verifiable reasoning tasks spanning scientific knowledge drawn from SciKnowEval \citep{sciknoweval}, a benchmark of multiple-choice science questions.
 Across all tasks, we train on the training split and evaluate on the held-out test split by generating $N{=}16$ rollouts per question and reporting mean accuracy (pass@1) as well as pass@$k$.

 All experiments follow the implementation and configuration of \citet{sdpo}, and use Qwen3-8B~\citep{yang2025qwen3} and Olmo-3-7B-Instruct~\citep{olmo3} as the base models, trained with AdamW~\citep{adamW} for up to 30 epochs bounded by a 5h training time budget.
At each training step, the policy generates $N{=}8$ rollouts per question for a batch of 32 questions, sampled with temperature 1.0.
 Each configuration is run with 3 seeds on 4 Nvidia H200 GPUs.
 We compare the following models: 

\textbf{GRPO}. Standard GRPO generating $N{=}8$ rollouts per question for a batch of 32 questions. The batch is optimized in mini-batches of 8 questions.

\textbf{SDSD ($K=1$)}. SDPO-style self-distillation \citep{sdpo} where the teacher is the base model conditioned on one correct student rollout.

\textbf{SDSD ($K=3$)}. We introduce a baseline using an ensemble of $K=3$ teachers.
Everything is the same as above, but we collect three distinct correct demonstrations (if available) and create an independent teacher from each, then average their distillation losses per student rollout. This requires $K$ forward passes through the teacher, though it adds only $5.8\%$ wall-clock time per training step since generating the student rollouts dominates computation.

Fig. \ref{fig:sdpo_main_pass_k} shows SDSD variants have flatter pass@k curves than GRPO, indicating lower functional diversity. Table \ref{tab:sdpo_best_ckpt_perf_diversity_combined} reports mean accuracy across 3 seeds, using the best checkpoint per method by average accuracy. As shown in Fig. \ref{fig:qa_training_curves}, SDSD variants attain higher pass@1 but lower pass@k throughout training, consistent with reduced diversity.

\begin{figure}[t]
    \centering
    \includegraphics[width=1\linewidth]{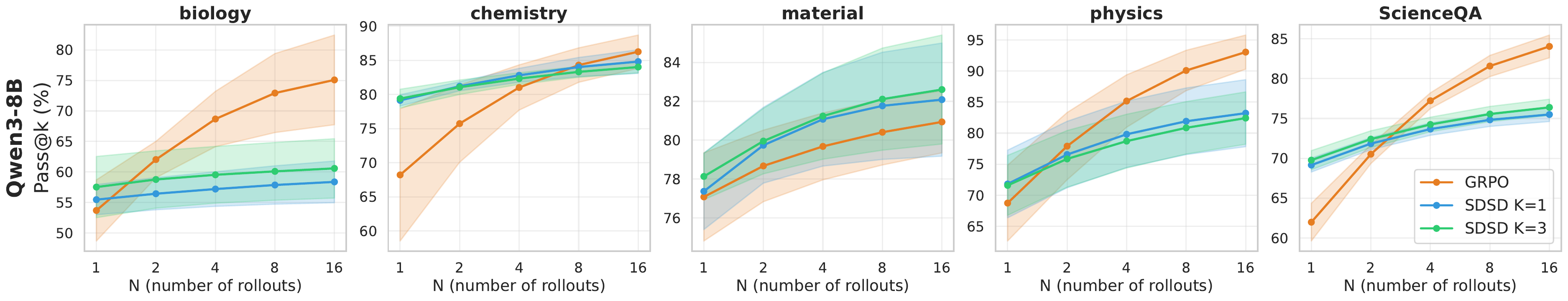}
    \includegraphics[width=1\linewidth]{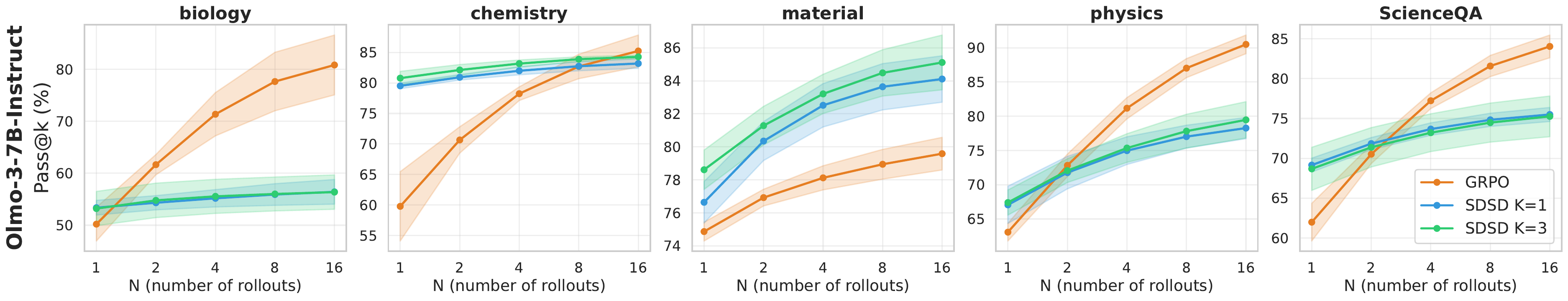}
    
    \caption{Pass@k curves for Science QA (4 tasks and average across tasks). SDSD achieves better pass@1 but its curves flatten quickly, indicating low functional diversity.
    mean $\pm$ stderr over 3 seeds.}
    \label{fig:sdpo_main_pass_k}
\end{figure}

\begin{table}[ht]
\centering
\caption{Pass@1 and Pass@16 at the best checkpoint per dataset, for Qwen3-8B and Olmo-3-7B-Instruct. Overall both SDSD variants have better Pass@1 but worse Pass@16, indicating low diversity between rollouts. Mean $\pm$ std across 3 seeds. \textbf{Bold} = best per column.}
\label{tab:sdpo_best_ckpt_perf_diversity_combined}
\resizebox{1.0\textwidth}{!}{%
\begin{tabular}{lcccccccccc}
\toprule
\multirow{2}{*}{\textbf{Method}} & \multicolumn{2}{c}{\textbf{Biology}} & \multicolumn{2}{c}{\textbf{Chemistry}} & \multicolumn{2}{c}{\textbf{Material}} & \multicolumn{2}{c}{\textbf{Physics}} & \multicolumn{2}{c}{\textbf{Average}} \\
\cmidrule(lr){2-3}\cmidrule(lr){4-5}\cmidrule(lr){6-7}\cmidrule(lr){8-9}\cmidrule(lr){10-11}
  & Pass@1 & Pass@16 & Pass@1 & Pass@16 & Pass@1 & Pass@16 & Pass@1 & Pass@16 & Pass@1 & Pass@16 \\
\midrule
\multicolumn{11}{l}{\textit{Qwen3-8B}} \\
\midrule
GRPO & $57.2_{\pm3.9}$ & $\mathbf{69.3_{\pm2.9}}$ & $76.6_{\pm3.6}$ & $\mathbf{87.3_{\pm1.7}}$ & $\mathbf{79.3_{\pm0.4}}$ & $\mathbf{82.3_{\pm0.7}}$ & $74.4_{\pm2.9}$ & $\mathbf{95.6_{\pm0.5}}$ & $71.9_{\pm2.0}$ &
$\mathbf{83.6_{\pm1.1}}$ \\
SDSD $K$=1     & $57.5_{\pm1.2}$ & $60.3_{\pm3.2}$ & $\mathbf{78.8_{\pm0.9}}$ & $86.5_{\pm0.1}$ & $78.1_{\pm2.3}$ & $80.1_{\pm2.6}$ & $\mathbf{76.6_{\pm2.6}}$ & $88.1_{\pm2.7}$ & $72.7_{\pm1.1}$ & $78.7_{\pm1.2}$ \\
SDSD $K$=3     & $\mathbf{61.8_{\pm1.5}}$ & $64.8_{\pm1.0}$ & $78.5_{\pm0.4}$ & $83.9_{\pm1.1}$ & $77.7_{\pm1.0}$ & $80.1_{\pm0.6}$ & $75.8_{\pm2.8}$ & $85.4_{\pm3.3}$ & $\mathbf{73.4_{\pm0.8}}$ & $78.5_{\pm0.6}$ \\
\midrule
\multicolumn{11}{l}{\textit{Olmo-3-7B-Instruct}} \\
\midrule
GRPO & $50.2_{\pm3.2}$ & $\mathbf{80.9_{\pm5.8}}$ & $59.8_{\pm5.7}$ & $\mathbf{85.2_{\pm2.6}}$ & $74.9_{\pm0.6}$ & $79.6_{\pm1.0}$ & $63.1_{\pm1.3}$ & $\mathbf{90.5_{\pm1.4}}$ & $62.0_{\pm2.4}$ & $\mathbf{84.0_{\pm1.4}}$ \\
SDSD $K$=1     & $\mathbf{53.4_{\pm1.4}}$ & $56.4_{\pm2.4}$ & $79.5_{\pm0.5}$ & $83.2_{\pm0.8}$ & $76.6_{\pm1.3}$ & $84.1_{\pm1.4}$ & $67.1_{\pm2.7}$ & $78.3_{\pm1.5}$ & $\mathbf{69.1_{\pm0.9}}$ & $75.5_{\pm0.9}$ \\
SDSD $K$=3     & $53.2_{\pm3.3}$ & $56.4_{\pm3.3}$ & $\mathbf{80.8_{\pm1.1}}$ & $84.3_{\pm0.3}$ & $\mathbf{78.6_{\pm1.2}}$ & $\mathbf{85.1_{\pm1.7}}$ & $\mathbf{67.4_{\pm1.8}}$ & $79.5_{\pm2.7}$ & $68.7_{\pm2.7}$ & $75.3_{\pm2.6}$ \\
\bottomrule
\end{tabular}}
\end{table}

\subsection{Diverse External Demonstrations Still Lead to Diversity Collapse}
\label{sec:demos_diversity}

Previously, we evaluated SDSD on demonstrations coming from the student's own correct rollouts. Similar to approaches like OPSD~\citep{opsd}, we now investigate the case when the demonstrations come from external models. 
We will see that self-distillation with external demonstrations still suffers from diversity collapse, regardless of the diversity level of the demonstrations.

We analyze this in the Concept Graph setup. We create multiple demonstration datasets, with different levels of diversity, where each query has multiple correct solutions.
We obtain the demonstrations from different checkpoints of Qwen3-1.7B GRPO+diversity regularizer models. 
For each dataset, we train SDSD models as in \cref{sec:concept-graph} but this time we condition the EMA teacher on these external demonstrations instead of self-generated demonstrations. This way, we can control the diversity of the teacher demonstrations and see their influence on the diversity of the learned student.

All resulting SDSD models have high in-distribution performance (pass@8), similar to the 
models that generated the demonstrations (Fig. \ref{fig:demo_diversity}). Nevertheless, on the Harder Graph setup (that requires diverse exploration during training) pass@8 performance of the students remains low.
Moreover, we compare the semantic diversity of the demonstrations and the diversity of the resulting SDSD models for in-distribution questions. 
We observe that the student models have low semantic diversity, regardless of the level of diversity of the demonstrations. 
This shows that using diverse demonstrations in the teacher does not fix the diversity problem.

\begin{figure}[t]
    \centering
    \makebox[\textwidth]{%
        \hfill
        \begin{subfigure}{0.45\textwidth}
            \centering
            \includegraphics[width=\linewidth]{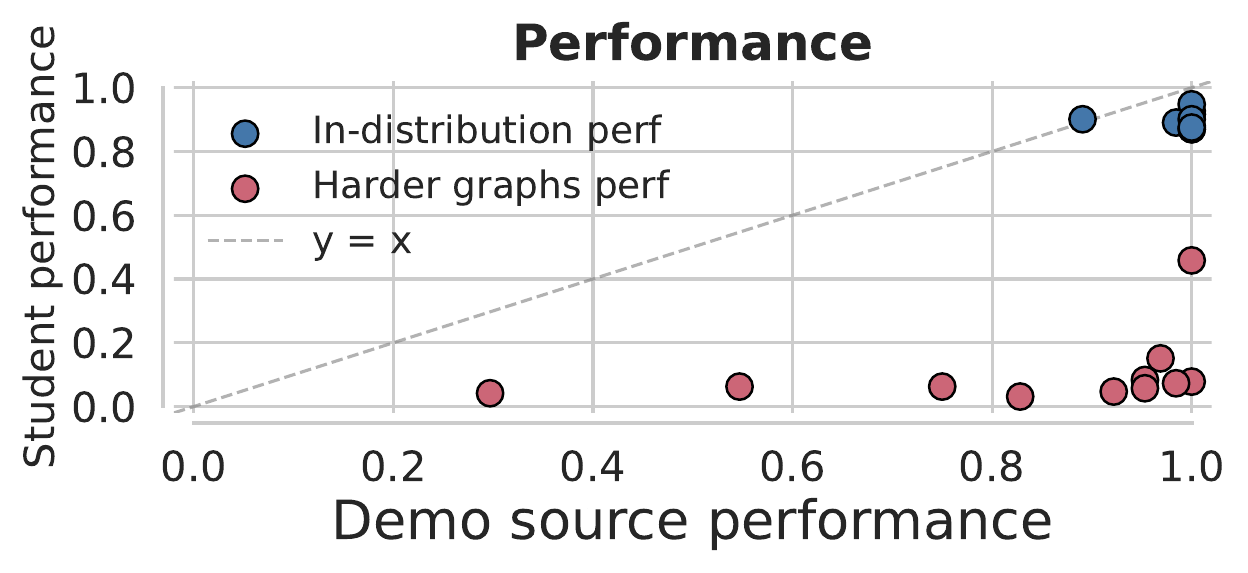}
        \end{subfigure}%
        \hfill
        \begin{subfigure}{0.45\textwidth}
            \centering
            \includegraphics[width=\linewidth]{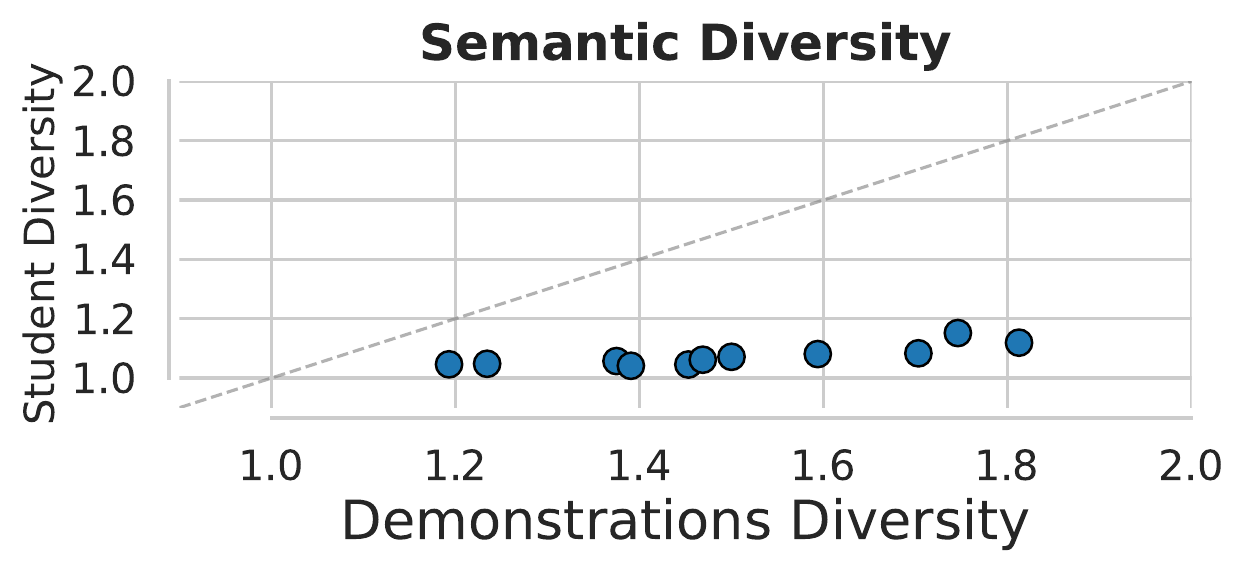}
        \end{subfigure}%
        \hfill
    }%
    \caption{We train SDSD models using external demonstrations that are both correct and diverse. We use multiple datasets of demonstrations with increasing levels of diversity. Each demonstration dataset leads to SDSD models with good in-distribution performance (left plot), but low semantic diversity (right plot), regardless of the level of diversity in the demonstration. This shows that, even with external demonstrations, we still have a problem of diversity collapse, regardless of the level of diversity of the demonstrations. 
    }
    \label{fig:demo_diversity}
\end{figure}

\subsection{Token Entropy Is Not a Sufficient Metric of Diversity}
\label{sec:entropy}

In Fig. \ref{fig:graph_entropy} on Concept Graphs,  SDSD has clearly lower average token-level entropy than the GRPO baselines. This correlates well with SDSD's lack of semantic diversity. On the other hand, token-level entropy cannot distinguish between GRPO and GRPO+diversity, even though GRPO+diversity is clearly more semantically diverse and has better performance on the Harder Graphs task that requires diverse rollouts during training.

Conversely, in the QA setting (Fig. \ref{fig:qa_entropy}), SDSD has \emph{higher} token-level entropy than GRPO despite lower functional diversity and pass@16. 
Again, token-level entropy does not correlate well with a meaningful notion of diversity or performance. This suggests that we need more nuanced notions of diversity than entropy at the token level.

\begin{figure}[!t]
    \centering
    \begin{subfigure}[c]{0.32\textwidth}
        \centering
    \includegraphics[width=1.0\linewidth]{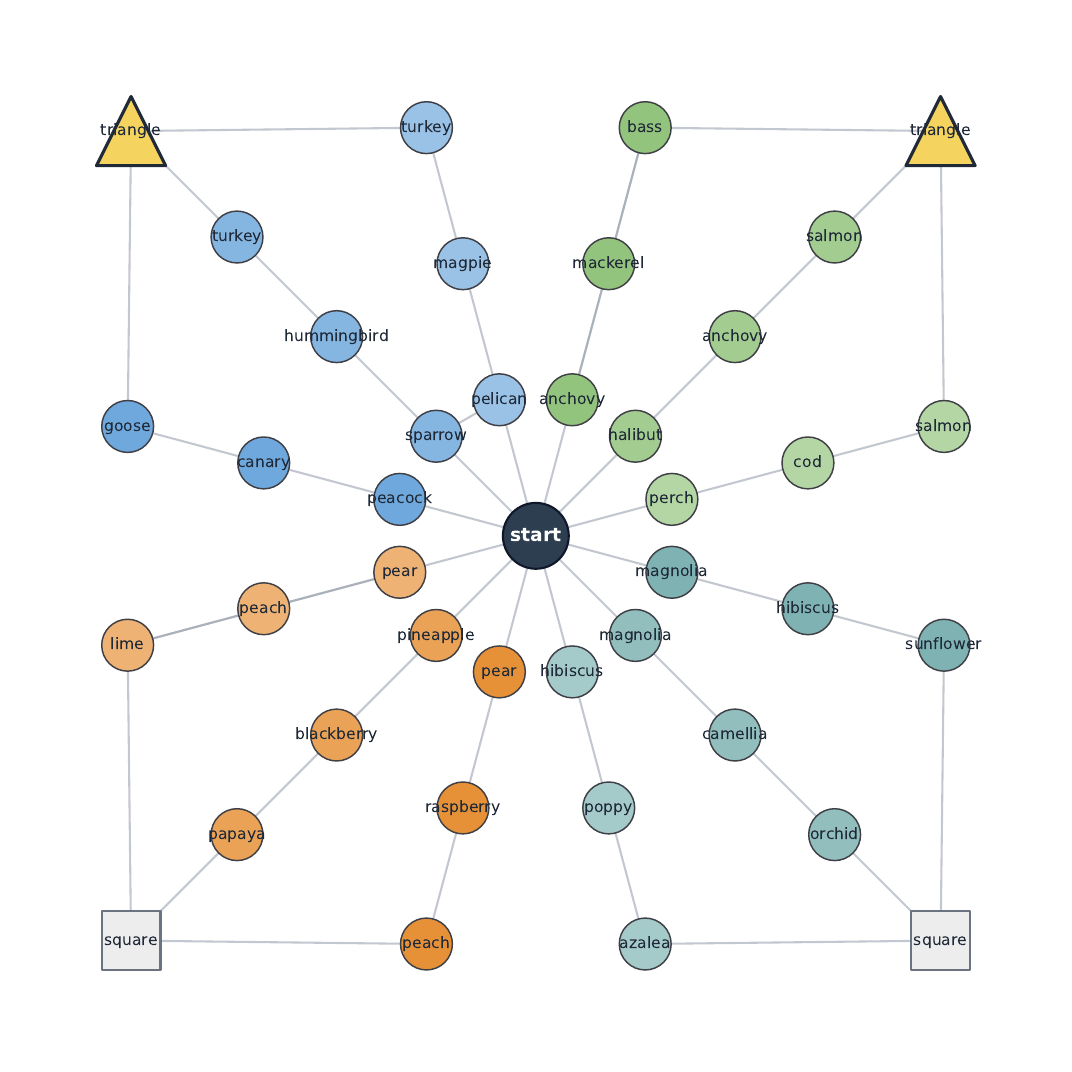}
    \caption{Sample Concept Graph.}
    \label{fig:concept_graph_example}
    \end{subfigure}
    \hfill
    \begin{subfigure}[c]{0.32\textwidth}
    \centering
    \includegraphics[width=1.0\linewidth]{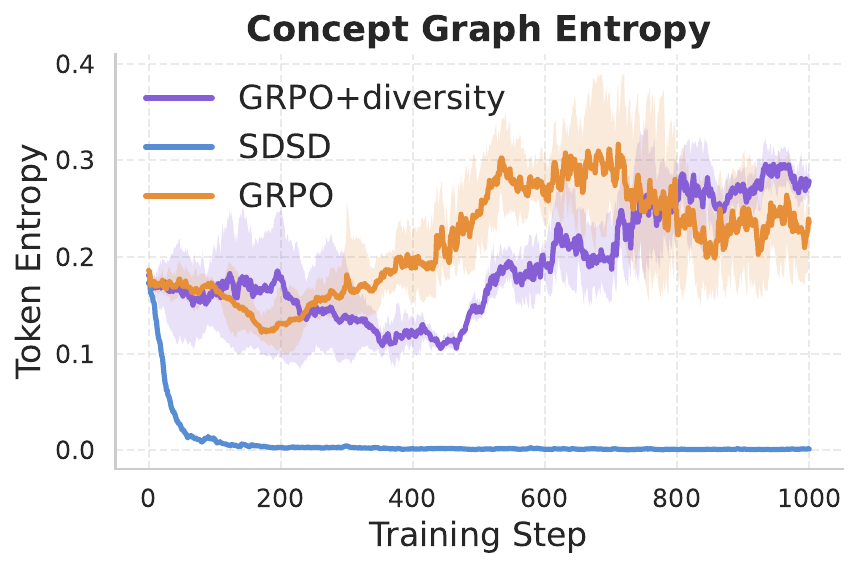}
    \caption{Token Entropy of Concept Graph models}
    \label{fig:graph_entropy}
    \end{subfigure}
    \hfill
    \begin{subfigure}[c]{0.32\textwidth}
        \centering
    \includegraphics[width=1.0\linewidth]{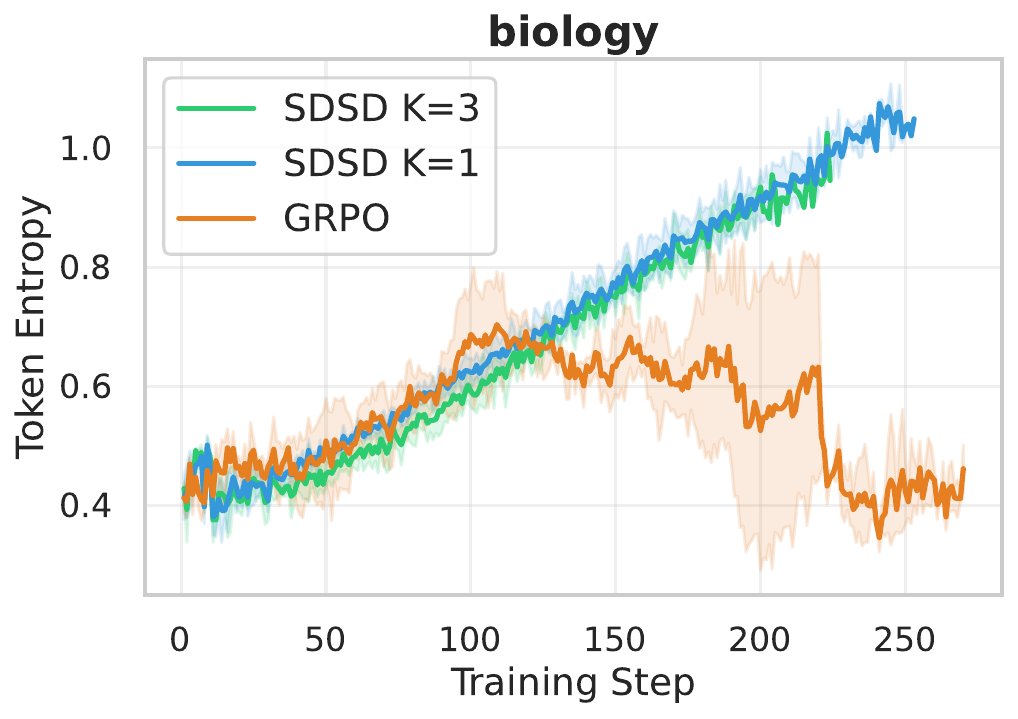}
        \caption{Token Entropy of QA models.}
    \label{fig:qa_entropy}
    \end{subfigure}

    \caption{
    \textbf{(a)} A Concept Graph instance with chain length 3: four concept chains radiate from \emph{start}; the two yellow triangle endpoints are valid targets, the two gray squares are distractors.
    \textbf{(b, c)} Token-level entropy does not tell the whole story.
    For ConceptGraph, token-level entropy cannot explain the higher semantic diversity (Fig. \ref{fig:graph_results}) of GRPO+diversity compared to GRPO. In the QA setting, average token-level entropy does not correlate with functional diversity or low pass@k, since GRPO has the lowest token-level entropy but highest functional diversity and pass@16.}
    \label{fig:entropy}
\end{figure}

\section{Related Work}

\paragraph{On-policy self-distillation.}
Distillation transfers knowledge from a teacher model to a student~\citep{hinton2015distilling, bucilua_2006_original_distillation}, and on-policy distillation gives teacher guidance on student-generated data~\citep{rishab_on_policy_distill}. Moreover, self-distillation methods like SDPO~\citep{sdpo}, OPSD~\citep{opsd, penaloza2026privileged}, SDFT~\citep{sdft_continual}, RLSD~\citep{rlsd}, and OPCD~\citep{opcd} 
use the same model as the teacher to give guidance, by conditioning it on privileged information, achieving strong results. 
As privileged information they use their own correct rollouts, demonstrations from stronger models (a setting that we also use), correct answers, or environment feedback on the student rollouts, such as runtime errors. All these approaches use token-wise supervision to give dense feedback to the student, with \cite{opsd} noting that significant gains come from matching the teacher and student over the whole vocabulary. This dense feedback makes self-distillation achieve high performance in small number of steps \citep{opsd, rlsd} before plateauing or decreasing. SDPO~\citep{sdpo} uses two settings: one where the demonstrations are collected from correct student responses and one where coding runtime feedback is used as privileged information. For the second approach, the alignment between feedback and student rollout should be implicitly higher, since the feedback is generated based on the rollout, thus the teacher should have an easier time understanding their relation and conversely provide good guiding signal. 
RLSD~\citep{rlsd} points to an irreducible gap in the objective of self-distillation, which results in privileged information leakage when the optimization is done, as usual, in mini-batches. They propose to fix this by using the teacher guidance to change the magnitude of the RL gradient, but keep the direction given by the verifiable reward. 

\paragraph{Entropy collapse and mode-seeking.}

Entropy collapse under on-policy training is widely documented~\citep{gx2025kl, beyond_base_2025, invisible_leash}  with prior work attributing it primarily to mode-seeking of the on-policy training. 
Our analysis identifies an additional mechanism specific to self-distillation: unequal alignment between the sampled rollouts and the sampled demonstrations, as defined by PCMI. \citet{nagarajan2025roll_dice} analyze the diversity and creativity of LLMs using controlled graph understanding tasks.

Prior work on RLHF shows that RL-trained models are less diverse than SFT models~\citep{kirk2024_llm_ood_and_diversity}. This is addressed by 
Pass@k-aware training objectives~\citep{chen2025pass_k_learning}, and best-of-N fine-tuning~\citep{chow2025_best_n_inferenceaware} that directly optimize for output coverage. 
\citet{li2025_darling} uses LLM embeddings to partition the rollouts and use them to compute a diversity score. Multiplying the reward with this diversity score improves the diversity of math reasoning and creative writing. 

\section{Conclusion and Limitations}

\paragraph{Conclusion.} We analyzed on-policy self-distillation with sampled demonstrations (SDSD) through the lens of rollout diversity. While SDSD models achieve strong average accuracy, their pass@\(k\) curves are often flat, indicating collapsed functional diversity.
Theoretically, the optimal self-distillation policy tilts the base distribution by a pointwise conditional mutual information (PCMI) score between the student rollout and the demonstration. Unlike standard RL objectives, which preserve probability ratios among equally correct rollouts, this mechanism amplifies pre-existing imbalances and reinforces solutions that already align with the demonstrations and base policy.
Empirically, we observed this diversity collapse on controlled graph path-finding, scientific QA, and synthetic tasks. Our results show that average accuracy alone is insufficient for evaluating self-distillation methods. Functional and semantic diversity are at risk of collapse and should be explicitly monitored during training and deployment.

\paragraph{Scope and limitations.} We focus specifically on the variant of self-distillation that uses sampled correct rollouts as demonstrations. We do not analyze settings where the teacher is conditioned on richer privileged signals, such as runtime errors in coding~\citep{sdpo}, environmental feedback, or external verifiers, where the learning dynamics may differ substantially.
Our theoretical analysis assumes a teacher frozen at the base policy, whereas most practical SDSD implementations, including those used in our experiments, employ an EMA teacher derived from the student itself. In addition, our derivation assumes demonstrations are sampled from the base policy, which more closely resembles OPSD~\citep{opsd} and our external-demonstration graph experiments than the fully self-generated setup of \citet{sdpo}.
In practice, both EMA teachers and self-generated demonstrations introduce additional forms of self-selection bias beyond those captured by our analysis. Finally, our derivation is presented at the sequence level; however, an analogous token-level derivation yields the same PCMI-based tilt at each next-token distribution, causing the effect to compound autoregressively along a trajectory, as discussed in \cref{app:token_level}.

\bibliography{bib}

@misc{NanoAhaMoment2025,
  author       = {Amirhossein Kazemnejad and Milad Aghajohari and Alessandro Sordoni and Aaron Courville and Siva Reddy},
  title        = {Nano Aha! Moment: Single File "RL for LLM" Library},
  year         = {2025},
  howpublished = {\url{https://github.com/McGill-NLP/nano-aha-moment}},
  note         = {GitHub repository}
}

@inproceedings{korbak2022rl_kl,
  title={RL with KL penalties is better viewed as Bayesian inference},
  author={Korbak, Tomasz and Perez, Ethan and Buckley, Christopher},
  booktitle={Findings of the Association for Computational Linguistics: EMNLP 2022},
  pages={1083--1091},
  year={2022}
}

@inproceedings{
beyond_base_2025,
    title={Does Reinforcement Learning Really Incentivize Reasoning Capacity in {LLM}s Beyond the Base Model?},
    author={Yang Yue and Zhiqi Chen and Rui Lu and Andrew Zhao and Zhaokai Wang and Yang Yue and Shiji Song and Gao Huang},
    booktitle={The Thirty-ninth Annual Conference on Neural Information Processing Systems},
    year={2026},
    url={https://openreview.net/forum?id=4OsgYD7em5}
}

@article{sdpo,
  title = {Reinforcement Learning via Self-Distillation},
  author = {Hübotter, Jonas and Lübeck, Frederike and Behric, Lejs and Baumann, Anton and Bagatella, Marco and Marta, Daniel and Hakimi, Ido and Shenfeld, Idan and Kleine Buening, Thomas and Guestrin, Carlos and Krause, Andreas},
  year = {2026},
  journal = {arXiv preprint arXiv:2601.20802},
}

@article{opsd,
  title={Self-Distilled Reasoner: On-Policy Self-Distillation for Large Language Models},
  author={Zhao, Siyan and Xie, Zhihui and Liu, Mengchen and Huang, Jing and Pang, Guan and Chen, Feiyu and Grover, Aditya},
  journal={arXiv preprint arXiv:2601.18734},
  year={2026}
}

@article{sdft_continual,
  title={Self-Distillation Enables Continual Learning},
  author={Shenfeld, Idan and Damani, Mehul and H{\"u}botter, Jonas and Agrawal, Pulkit},
  journal={arXiv preprint arXiv:2601.19897},
  year={2026}
}

@article{opcd,
  title={On-policy context distillation for language models},
  author={Ye, Tianzhu and Dong, Li and Wu, Xun and Huang, Shaohan and Wei, Furu},
  journal={arXiv preprint arXiv:2602.12275},
  year={2026}
}

@article{rlsd,
  title={Self-Distilled RLVR},
  author={Yang, Chenxu and Qin, Chuanyu and Si, Qingyi and Chen, Minghui and Gu, Naibin and Yao, Dingyu and Lin, Zheng and Wang, Weiping and Wang, Jiaqi and Duan, Nan},
  journal={arXiv preprint arXiv:2604.03128},
  year={2026}
}

@article{penaloza2026privileged,
  title={Privileged Information Distillation for Language Models},
  author={Penaloza, Emiliano and Vattikonda, Dheeraj and Gontier, Nicolas and Lacoste, Alexandre and Charlin, Laurent and Caccia, Massimo},
  journal={arXiv preprint arXiv:2602.04942},
  year={2026}
}

@article{sciknoweval,
  title={Sciknoweval: Evaluating multi-level scientific knowledge of large language models},
  author={Feng, Kehua and Shen, Xinyi and Wang, Weijie and Zhuang, Xiang and Tang, Yuqi and Zhang, Qiang and Ding, Keyan},
  journal={arXiv preprint arXiv:2406.09098},
  year={2024}
}

@inproceedings{rishab_on_policy_distill,
  title={On-policy distillation of language models: Learning from self-generated mistakes},
  author={Agarwal, Rishabh and Vieillard, Nino and Zhou, Yongchao and Stanczyk, Piotr and Ramos Garea, Sabela and Geist, Matthieu and Bachem, Olivier},
  booktitle={The twelfth international conference on learning representations},
  year={2024}
}

@article{onpolicydistillation_thinking_machines,
  author = {Lu, Kevin and Thinking~Machines, Lab},
  title = {On-Policy Distillation},
  journal = {Thinking Machines Lab: Connectionism},
  year = {2025},
  note = {https://thinkingmachines.ai/blog/on-policy-distillation},
  doi = {10.64434/tml.20251026},
}

@article{hinton2015distilling,
  title={Distilling the knowledge in a neural network},
  author={Hinton, Geoffrey and Vinyals, Oriol and Dean, Jeff},
  journal={arXiv preprint arXiv:1503.02531},
  year={2015}
}

@inproceedings{bucilua_2006_original_distillation,
  title={Model compression},
  author={Bucilă, Cristian and Caruana, Rich and Niculescu-Mizil, Alexandru},
  booktitle={Proceedings of the 12th ACM SIGKDD international conference on Knowledge discovery and data mining},
  pages={535--541},
  year={2006}
}

@article{invisible_leash,
  title={The invisible leash: Why rlvr may or may not escape its origin},
  author={Wu, Fang and Xuan, Weihao and Lu, Ximing and Liu, Mingjie and Dong, Yi and Harchaoui, Zaid and Choi, Yejin},
  journal={arXiv preprint arXiv:2507.14843},
  year={2025}
}

@inproceedings{
gx2025kl,
title={{KL}-Regularized Reinforcement Learning for Generative Modelling is Designed to Mode Collapse},
author={Anthony GX-Chen and Jatin Prakash and Jeff Guo and Rob Fergus and Rajesh Ranganath},
booktitle={The Fourteenth International Conference on Learning Representations},
year={2026},
url={https://openreview.net/forum?id=flBRtdIihA}
}

@article{li2025_darling,
  title={Jointly reinforcing diversity and quality in language model generations},
  author={Li, Tianjian and Zhang, Yiming and Yu, Ping and Saha, Swarnadeep and Khashabi, Daniel and Weston, Jason and Lanchantin, Jack and Wang, Tianlu},
  journal={arXiv preprint arXiv:2509.02534},
  year={2025}
}

@inproceedings{
kirk2024_llm_ood_and_diversity,
title={Understanding the Effects of {RLHF} on {LLM} Generalisation and Diversity},
author={Robert Kirk and Ishita Mediratta and Christoforos Nalmpantis and Jelena Luketina and Eric Hambro and Edward Grefenstette and Roberta Raileanu},
booktitle={The Twelfth International Conference on Learning Representations},
year={2024},
url={https://openreview.net/forum?id=PXD3FAVHJT}
}

@article{chen2025pass_k_learning,
  title={Pass@ k training for adaptively balancing exploration and exploitation of large reasoning models},
  author={Chen, Zhipeng and Qin, Xiaobo and Wu, Youbin and Ling, Yue and Ye, Qinghao and Zhao, Wayne Xin and Shi, Guang},
  journal={arXiv preprint arXiv:2508.10751},
  year={2025}
}

@inproceedings{
chow2025_best_n_inferenceaware,
title={Inference-Aware Fine-Tuning for Best-of-N Sampling in Large Language Models},
author={Yinlam Chow and Guy Tennenholtz and Izzeddin Gur and Vincent Zhuang and Bo Dai and Aviral Kumar and Rishabh Agarwal and Sridhar Thiagarajan and Craig Boutilier and Aleksandra Faust},
booktitle={The Thirteenth International Conference on Learning Representations},
year={2025},
url={https://openreview.net/forum?id=77gQUdQhE7}
}

@inproceedings{llm_on_policy,
author = {Tajwar, Fahim and Singh, Anikait and Sharma, Archit and Rafailov, Rafael and Schneider, Jeff and Xie, Tengyang and Ermon, Stefano and Finn, Chelsea and Kumar, Aviral},
title = {Preference fine-tuning of LLMs should leverage suboptimal, on-policy data},
year = {2024},
booktitle = {Proceedings of the 41st International Conference on Machine Learning},
}

@article{on_policy_forgetting,
  title={Retaining by doing: The role of on-policy data in mitigating forgetting},
  author={Chen, Howard and Razin, Noam and Narasimhan, Karthik and Chen, Danqi},
  journal={arXiv preprint arXiv:2510.18874},
  year={2025}
}

@article{zhang2025interplay_pre_mid_RL,
  title={On the interplay of pre-training, mid-training, and rl on reasoning language models},
  author={Zhang, Charlie and Neubig, Graham and Yue, Xiang},
  journal={arXiv preprint arXiv:2512.07783},
  year={2025}
}

@inproceedings{
nagarajan2025roll_dice,
title={Roll the dice \& look before you leap: Going beyond the creative limits of next-token prediction},
author={Vaishnavh Nagarajan and Chen Henry Wu and Charles Ding and Aditi Raghunathan},
booktitle={Forty-second International Conference on Machine Learning},
year={2025},
url={https://openreview.net/forum?id=Hi0SyHMmkd}
}

@article{yang2025qwen3,
  title={Qwen3 technical report},
  author={Yang, An and Li, Anfeng and Yang, Baosong and Zhang, Beichen and Hui, Binyuan and Zheng, Bo and Yu, Bowen and Gao, Chang and Huang, Chengen and Lv, Chenxu and others},
  journal={arXiv preprint arXiv:2505.09388},
  year={2025}
}

@misc{olmo3,
      title={OLMo 3},
      author={Team OLMo and Allyson Ettinger and Amanda Bertsch and Bailey Kuehl and David Graham and David Heineman and Dirk Groeneveld and Faeze Brahman and Finbarr Timbers and Hamish Ivison and Jacob Morrison and Jake Poznanski and Kyle Lo and Luca Soldaini and Matt Jordan and Mayee Chen and Michael Noukhovitch and Nathan Lambert and Pete Walsh and Pradeep Dasigi and Robert Berry and Saumya Malik and Saurabh Shah and Scott Geng and Shane Arora and Shashank Gupta and Taira Anderson and Teng Xiao and Tyler Murray and Tyler Romero and Victoria Graf and Akari Asai and Akshita Bhagia and Alexander Wettig and Alisa Liu and Aman Rangapur and Chloe Anastasiades and Costa Huang and Dustin Schwenk and Harsh Trivedi and Ian Magnusson and Jaron Lochner and Jiacheng Liu and Lester James V. Miranda and Maarten Sap and Malia Morgan and Michael Schmitz and Michal Guerquin and Michael Wilson and Regan Huff and Ronan Le Bras and Rui Xin and Rulin Shao and Sam Skjonsberg and Shannon Zejiang Shen and Shuyue Stella Li and Tucker Wilde and Valentina Pyatkin and Will Merrill and Yapei Chang and Yuling Gu and Zhiyuan Zeng and Ashish Sabharwal and Luke Zettlemoyer and Pang Wei Koh and Ali Farhadi and Noah A. Smith and Hannaneh Hajishirzi},
      year={2025},
      eprint={2512.13961},
      archivePrefix={arXiv},
      primaryClass={cs.CL},
      url={https://arxiv.org/abs/2512.13961},
}

@article{shao2024_deepseekmath_grpo,
  title={Deepseekmath: Pushing the limits of mathematical reasoning in open language models},
  author={Shao, Zhihong and Wang, Peiyi and Zhu, Qihao and Xu, Runxin and Song, Junxiao and Bi, Xiao and Zhang, Haowei and Zhang, Mingchuan and Li, YK and Wu, Yang and others},
  journal={arXiv preprint arXiv:2402.03300},
  year={2024}
}

@article{rafailov2023_dpo,
  title={Direct preference optimization: Your language model is secretly a reward model},
  author={Rafailov, Rafael and Sharma, Archit and Mitchell, Eric and Manning, Christopher D and Ermon, Stefano and Finn, Chelsea},
  journal={Advances in Neural Information Processing Systems},
  volume={36},
  pages={53728--53741},
  year={2023}
}

@inproceedings{
adamW,
title={Decoupled Weight Decay Regularization},
author={Ilya Loshchilov and Frank Hutter},
booktitle={International Conference on Learning Representations},
year={2019},
url={https://openreview.net/forum?id=Bkg6RiCqY7},
}

@book{bishop2006pattern,
  title={Pattern recognition and machine learning},
  author={Bishop, Christopher M},
  volume={4},
  number={4},
  year={2006},
  publisher={Springer}
}
\bibliographystyle{apalike}

\newpage
\appendix

\section{Additional Results}
\subsection{Concept-Graph}

Similar to \cref{fig:graph_results}, we show performance and diversity results in \cref{fig:app_graph_results_pass_k} and \cref{fig:app_graph_results_diversity}. 

\begin{figure}[h]
    \centering
        \includegraphics[width=\linewidth]{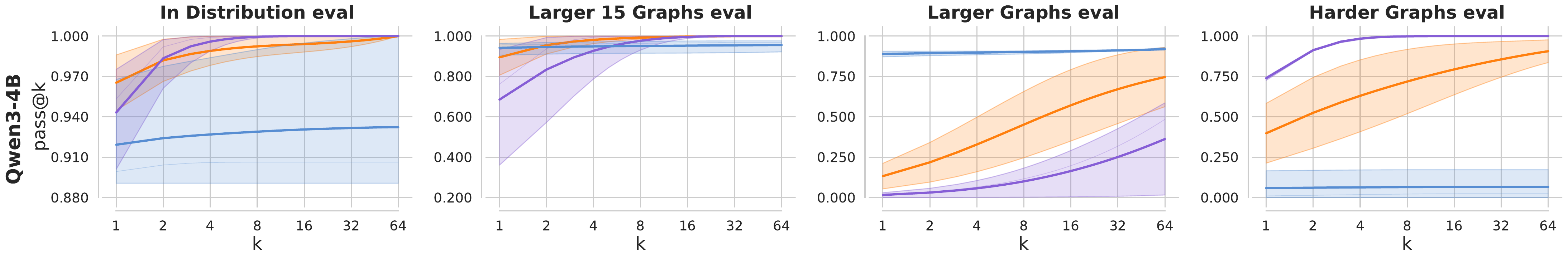}
        \includegraphics[width=\linewidth]{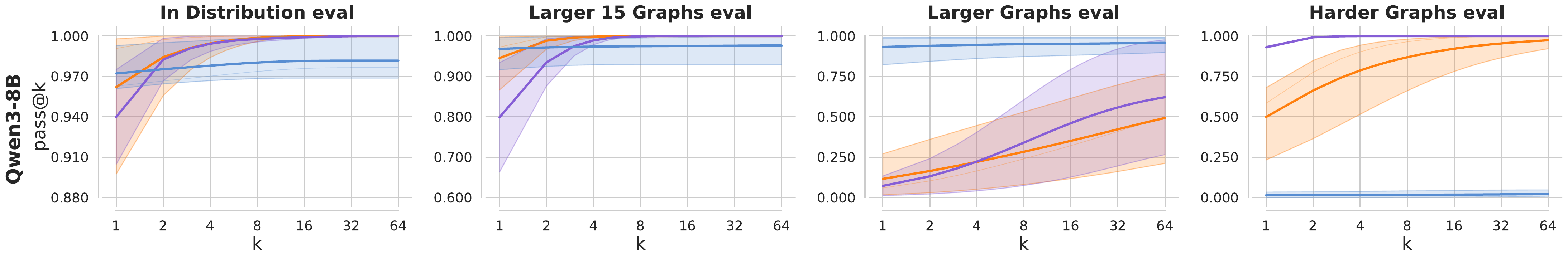}
        \includegraphics[width=\linewidth]{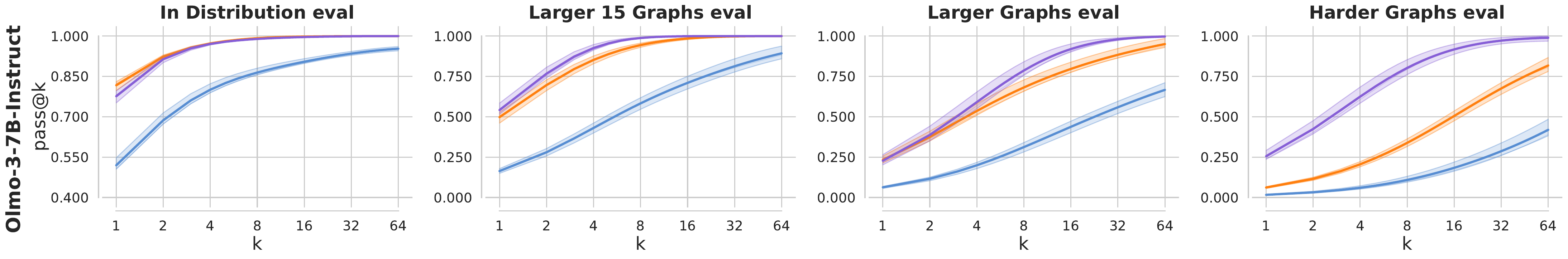}
        \includegraphics[width=0.6\linewidth]{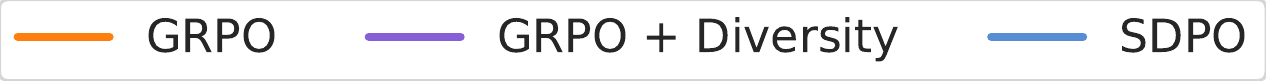}
        
    \caption{
    Compare GRPO, GRPO + Diversity, SDSD in terms of performance (pass@k curves) for models of different sizes. We observe that SDSD usually has more flat pass@k curves for In-Distribution and Larger Graphs (15 nodes or 20 nodes) showing low functional diversity. The Harder Graphs that require learning diverse trajectories during training remains mainly unsolved by SDSD, showing again a lack of output diversity.
    Mean and min/max over 3 seeds are shown.}
    \label{fig:app_graph_results_pass_k}
\end{figure}

\begin{figure}[h]
    \centering
        \includegraphics[width=0.8\linewidth]{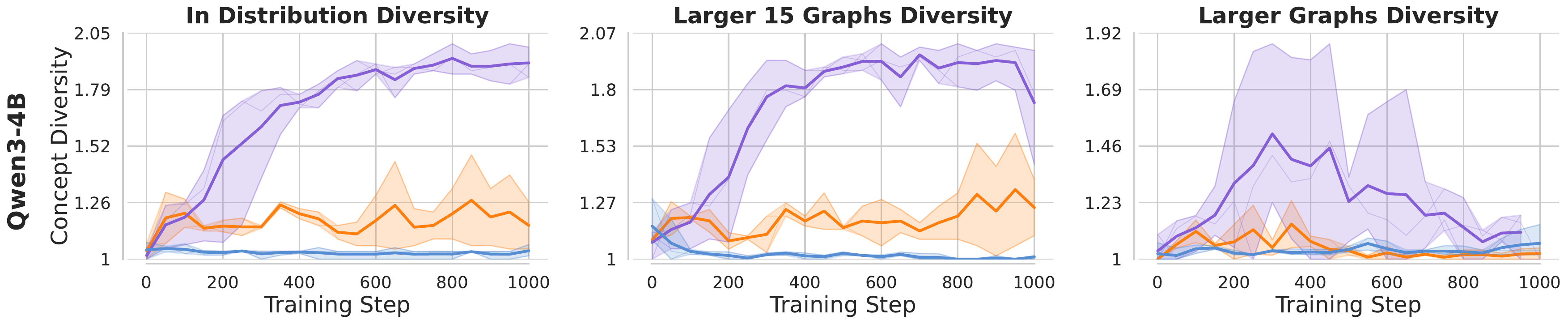}
        \includegraphics[width=0.8\linewidth]{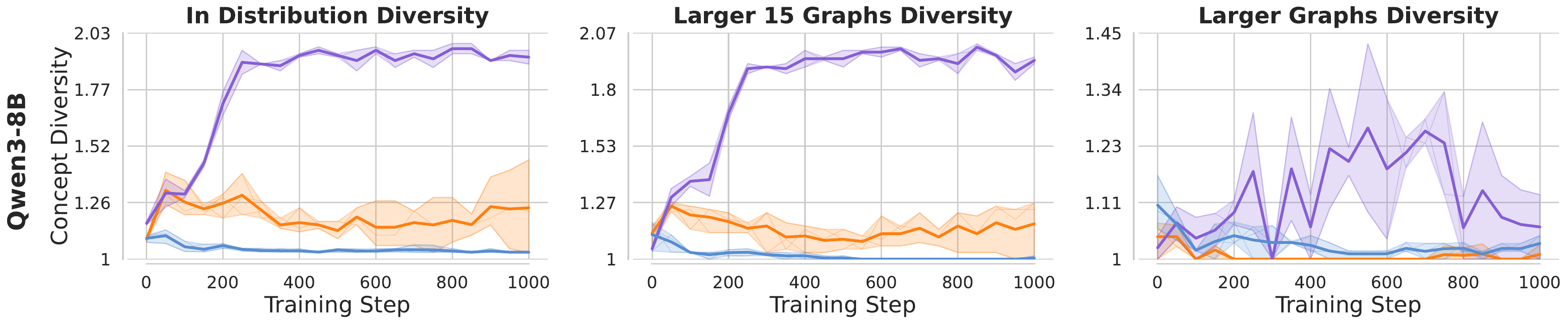}
        \includegraphics[width=0.8\linewidth]{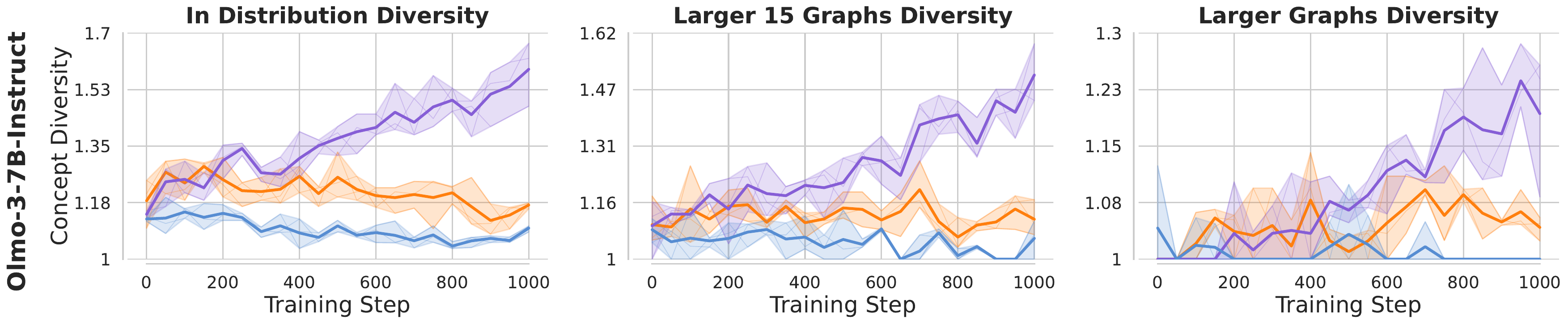}
        \includegraphics[width=0.6\linewidth]{figures/good_figures/main_compare_legend_horizontal.pdf}
    \caption{
    Compare GRPO, GRPO + Diversity, SDSD in terms of semantic diversity for models of different sizes. We observe that SDSD has the lowest concept diversity (as defined in the main text) across training. This explicit measure of diversity gives additional evidence of the loss of diversity in SDSD models. }
    \label{fig:app_graph_results_diversity}
\end{figure}

\subsection{Science QA}

We show the average performance and diversity scores evolution across training steps in Fig. \ref{fig:qa_training_curves}.

\begin{figure}[h]
    \centering
    \includegraphics[width=0.98\linewidth]{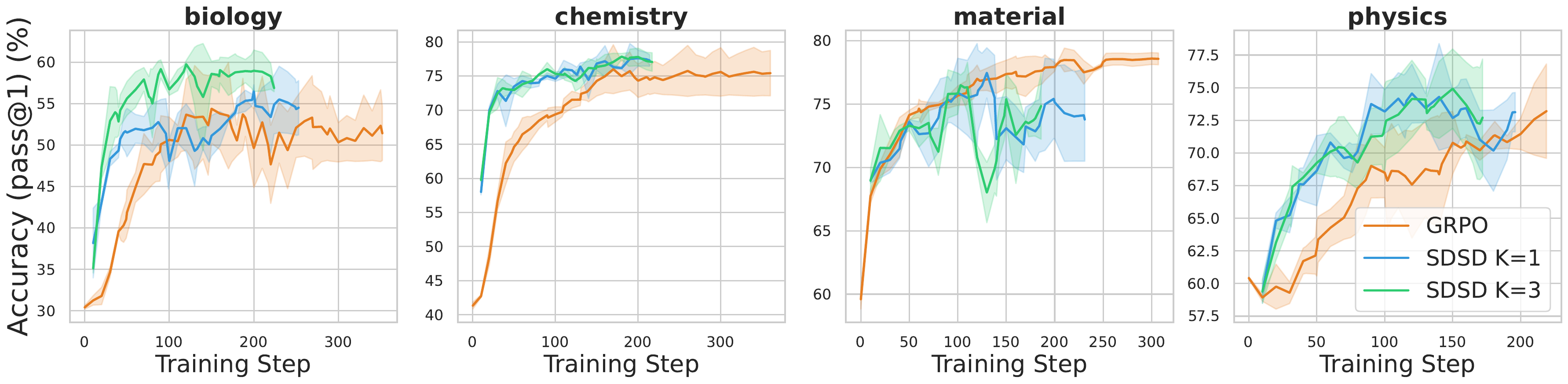}
    \caption{Pass@1 scores of SDSD and GRPO across 5h of training time.}
    \label{fig:qa_training_curves}
\end{figure}

\section{Detailed Derivations and Additional Results}
\label{app:detailed-derivations}

\subsection{Standard KL-Regularized RL}
\label{app:rl-derivation}

We first consider the standard KL-regularized reinforcement learning objective
\begin{equation}
\label{eq:rl-objective}
\max_{\pi} \; \E_{y \sim \pi(\cdot \given x)}\!
\left[
R(y \given x)
\right]
- \beta_{\mathrm{RL}} \, \KL\!\left(\pi(\cdot \given x) \,\|\, \pi_0(\cdot \given x)\right).
\end{equation}

\begin{proposition}[Optimal policy for standard KL-regularized RL]
\label{prop:rl-optimum}
The optimizer of \eqref{eq:rl-objective} is
\begin{equation}
\label{eq:rl-optimum}
\pi^*_{\mathrm{RL}}(y \given x)
\pto
\pi_0(y \given x)
\exp\!\left( \frac{1}{\beta_{\mathrm{RL}}} R(y \given x) \right).
\end{equation}
\end{proposition}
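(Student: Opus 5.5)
The plan is to fold the reward term into the KL term, so that the objective in \eqref{eq:rl-objective} becomes, up to a constant, a single KL divergence to an explicitly normalized tilted distribution. Fix $x$ and define
\[
Z(x) := \sum_{y} \pi_0(y \given x)\exp\!\left(\tfrac{1}{\beta_{\mathrm{RL}}} R(y \given x)\right),
\qquad
\pi^*(y \given x) := \frac{\pi_0(y \given x)\exp\!\left(\tfrac{1}{\beta_{\mathrm{RL}}} R(y \given x)\right)}{Z(x)}.
\]
Here we assume $Z(x)<\infty$, which holds, for example, when $R$ is bounded, as in the binary-reward case; the sum becomes an integral for continuous outputs. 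Since all distributions are strictly positive on their support, $\pi^*$ is a well-defined distribution on the support of $\pi_0$.

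Next, divide the objective by $\beta_{\mathrm{RL}}>0$ and negate it, so that maximization becomes minimization of
\[
\E_{y\sim\pi}\!\left[\log\frac{\pi(y \given x)}{\pi_0(y \given x)} - \tfrac{1}{\beta_{\mathrm{RL}}}R(y \given x)\right].
\]
Substitute $\log \pi_0(y\given x) + \tfrac{1}{\beta_{\mathrm{RL}}}R(y\given x) = \log \pi^*(y\given x) + \log Z(x)$ into this expression. It then equals $\KL(\pi(\cdot\given x)\,\|\,\pi^*(\cdot\given x)) - \log Z(x)$. The term $\log Z(x)$ does not depend on $\pi$, so the problem reduces to minimizing a KL divergence over $\pi$.

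By Gibbs' inequality, $\KL(\pi\|\pi^*)\ge 0$, with equality if and only if $\pi=\pi^*$. The minimizer is therefore unique and equals $\pi^*$, which is exactly \eqref{eq:rl-optimum}. Any $\pi$ that puts mass outside the support of $\pi_0$ gives an infinite KL penalty, so restricting attention to that support loses nothing. An alternative route is a Lagrangian with the normalization constraint, setting the functional derivative $\log\pi - \log\pi_0 + 1 - R/\beta_{\mathrm{RL}} + \lambda$ to zero. The KL-completion argument is cleaner because it gives global optimality and uniqueness directly.

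The only step that needs care is the finiteness of $Z(x)$, together with the requirement that the expectations be well defined. Since this result is cited as well known, I would state $Z(x)<\infty$ as a standing assumption, noting that it holds for bounded rewards. The same completion-of-KL template will then be reused for Proposition~\ref{prop:main-sdkl-optimum}. There, the objective expands to $(1+\beta)\E_\pi[\log\pi] - \E_\pi\big[\E_{y^{\mathrm{corr}}}\log\pi_0(y\given x,y^{\mathrm{corr}})\big] - \beta\,\E_\pi[\log\pi_0(y\given x)]$. Dividing by $1+\beta$ and rewriting the middle term as $\log\pi_0(y\given x) + i(y;y^{\mathrm{corr}}\mid x)$ yields a single KL to the tilted distribution in \eqref{eq:main-sdkl-optimum}.
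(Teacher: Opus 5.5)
Your proof is correct, but it takes a different route from the paper's. The paper fixes $x$ and forms a Lagrangian with only the normalization constraint. It sets the derivative with respect to each $\pi(y)$ to zero, which gives $\log\pi(y)=\log\pi_0(y)+R(y)/\beta_{\mathrm{RL}}+c$, and then exponentiates and normalizes. This is exactly the alternative you mention in passing. That approach is constructive: the tilted form falls out of the stationarity condition rather than having to be guessed, and the paper reuses the same template for SDSD-KL. As written, however, it only exhibits a stationary point. To conclude that this point is the maximizer, one still has to note that the objective is strictly concave in $\pi$ and that the stationary point is strictly positive, so the omitted nonnegativity constraints are inactive. In countably infinite or continuous output spaces, ``differentiating with respect to $\pi(y)$'' is also only formal. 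Your completion-of-KL argument (the Gibbs variational principle) requires knowing the normalized target $\pi^*$ in advance. In return, it gives global optimality and uniqueness directly from Gibbs' inequality, handles the support issue and general output spaces without any calculus of variations, and makes explicit the assumption $Z(x)<\infty$ that the paper's final normalization step uses tacitly. Your sketch for Proposition~\ref{prop:main-sdkl-optimum} also goes through. There the normalizer is automatically at most $1$: by Jensen and weighted AM--GM, each term $\pi_0(y\given x)\exp\big(\tfrac{1}{1+\beta}\E_{y^{\mathrm{corr}}}[i(y;y^{\mathrm{corr}}\mid x)]\big)$ is bounded by $\tfrac{\beta}{1+\beta}\pi_0(y\given x)+\tfrac{1}{1+\beta}\E_{y^{\mathrm{corr}}}[\pi_0(y\given x,y^{\mathrm{corr}})]$, and these bounds sum to $1$ over $y$.
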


\begin{proof}
Fix $x$ and suppress it in the notation. The objective is
\[
\max_{\pi}
\sum_y \pi(y) R(y)
- \beta_{\mathrm{RL}} \sum_y \pi(y) \log \frac{\pi(y)}{\pi_0(y)}
\quad\text{subject to}\quad
\sum_y \pi(y)=1.
\]
Its Lagrangian is
\[
\mathcal{L}(\pi,\lambda)
=
\sum_y \pi(y) R(y)
- \beta_{\mathrm{RL}} \sum_y \pi(y) \log \frac{\pi(y)}{\pi_0(y)}
+ \lambda \left(\sum_y \pi(y)-1\right).
\]
Differentiating with respect to $\pi(y)$ gives
\[
R(y) - \beta_{\mathrm{RL}}\!\left(\log \frac{\pi(y)}{\pi_0(y)} + 1\right) + \lambda = 0.
\]
Rearranging,
\[
\log \pi(y) = \log \pi_0(y) + \frac{1}{\beta_{\mathrm{RL}}} R(y) + c,
\]
where $c$ is a constant independent of $y$. Exponentiating and normalizing yields
\[
\pi^*_{\mathrm{RL}}(y)
\pto
\pi_0(y) \exp\!\left(\frac{1}{\beta_{\mathrm{RL}}} R(y)\right),
\]
which, after reintroducing the x into the notation, proves \eqref{eq:rl-optimum}.
\end{proof}

\begin{remark}
Equation \eqref{eq:rl-optimum} shows that KL-regularized RL preserves the base policy while exponentially tilting it by the reward.
\end{remark}

\subsection{SDSD-KL: Distillation from a Correct Demonstration}

Consider as teacher the base policy conditioned on a fixed correct demonstration. Let $y^{\mathrm{corr}}$ denote a correct reference demonstration for the same input $x$. We define the teacher by
\begin{equation}
\label{eq:teacher-conditioned}
\pi_0(y \given x, y^{\mathrm{corr}}).
\end{equation}

The corresponding pointwise conditional mutual information (PCMI) is
\begin{equation}
\label{eq:pcmi-fixed}
i(y; y^{\mathrm{corr}} \mid x)
:=
\log \frac{\pi_0(y \given x, y^{\mathrm{corr}})}{\pi_0(y \given x)}.
\end{equation}
This quantity measures how much conditioning on the fixed correct demonstration changes the base policy's log-probability of the candidate sequence $y$. Thus, it can be interpreted as how much support $y^{\mathrm{corr}}$ brings for $y$.

But the demonstration is not fixed, it is sampled from the correct solutions. 
Let
\begin{equation}
\label{eq:pcorr-def}
y^{\mathrm{corr}} \sim p_{\mathrm{corr}}(\cdot \given x)
\end{equation}
be a correct demonstration drawn from a reference distribution over correct solutions. For each realized $y^{\mathrm{corr}}$, we define the teacher
\begin{equation}
\label{eq:q-demo}
q_{y^{\mathrm{corr}}}(y \given x)
:=
\pi_0(y \given x, y^{\mathrm{corr}}).
\end{equation}

The SDSD-KL objective averages the distillation loss over demonstrations:

\begin{equation}
\label{eq:sdkl-objective-pcmi-view}
\min_{\pi} \; \E_{y^{\mathrm{corr}}}
\Big[
\KL\!\left(\pi(\cdot \given x) \,\|\, \pi_0(\cdot \given x, y^{\mathrm{corr}})\right)
\Big]
+ \beta \, \KL\!\left(\pi(\cdot \given x) \,\|\, \pi_0(\cdot \given x)\right).
\end{equation}

\begin{proposition}[Optimal policy for SDSD-KL]
\label{prop:sdkl-optimum}
The optimizer of \eqref{eq:sdkl-objective-pcmi-view} is
\begin{equation}
\label{eq:sdkl-optimum-pcmi}
\pi^*_{\mathrm{SD\text{-}KL}}(y \given x)
\pto
\pi_0(y \given x)
\exp\!\left(
\frac{1}{1+\beta}
\, \E_{y^{\mathrm{corr}} \sim p_{\mathrm{corr}}(\cdot \given x)}
\big[
 i(y; y^{\mathrm{corr}} \mid x)
\big]
\right).
\end{equation}
\end{proposition}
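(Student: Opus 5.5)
The plan is to reduce the SDSD-KL objective \eqref{eq:sdkl-objective-pcmi-view} to a single KL divergence against a normalized geometric mixture. The minimizer can then be read off directly, in the same spirit as the Lagrangian argument for Proposition~\ref{prop:rl-optimum}.

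Fix $x$ and suppress it, writing $q_c(y) := \pi_0(y \given c)$ for $c = y^{\mathrm{corr}}$. We assume finitely many demonstrations, or else enough integrability to exchange the expectation over $c$ with the sum over $y$. Expanding both divergences gives
\[
\E_c\big[\KL(\pi\|q_c)\big] + \beta\,\KL(\pi\|\pi_0)
= (1+\beta)\sum_y \pi(y)\log\pi(y) - \sum_y \pi(y)\Big(\E_c[\log q_c(y)] + \beta\log\pi_0(y)\Big).
\]
Next we substitute $\log q_c(y) = \log\pi_0(y) + i(y;c)$, using the PCMI definition \eqref{eq:pcmi-fixed}. The bracket then becomes $(1+\beta)\log\pi_0(y) + \E_c[i(y;c)]$.

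Dividing by $1+\beta$, the objective equals
\[
(1+\beta)\sum_y \pi(y)\log\frac{\pi(y)}{\pi_0(y)\exp\!\big(\tfrac{1}{1+\beta}\E_c[i(y;c)]\big)}.
\]
Define $\tilde\pi(y) := \pi_0(y)\exp\!\big(\tfrac{1}{1+\beta}\E_c[i(y;c)]\big)/Z$, with normalizer $Z = \sum_y \pi_0(y)\exp(\cdot)$. The objective is then $(1+\beta)\big[\KL(\pi\|\tilde\pi) - \log Z\big]$. Since $1+\beta>0$ and $Z$ does not depend on $\pi$, Gibbs' inequality shows the unique minimizer over the simplex is $\pi = \tilde\pi$. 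This is exactly \eqref{eq:sdkl-optimum-pcmi}.

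As a cross-check, one can instead form the Lagrangian with $\lambda(\sum_y\pi(y)-1)$ and set its derivative to zero:
\[
(1+\beta)(\log\pi(y)+1) - \E_c[\log q_c(y)] - \beta\log\pi_0(y) + \lambda = 0.
\]
This gives the same exponential-tilt form, and strict convexity (the term $(1+\beta)\sum_y\pi\log\pi$) guarantees the stationary point is the global minimum.

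The only genuine obstacle is well-posedness. The first condition is that $Z<\infty$, which is automatic on a finite or countable output space when $\E_c[i(y;c)]$ is bounded above. The second is that $\E_c[\log q_c(y)]$ is finite, which holds under the standing strict-positivity assumption. I would state these as mild technical conditions rather than work around them. The remaining steps are routine algebra.
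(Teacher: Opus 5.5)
Your proposal is correct. It coincides with the paper's proof up to the expansion of the objective into $(1+\beta)\sum_y\pi(y)\log\pi(y)$ minus a term linear in $\pi$; the two differ only in how the minimizer is extracted.

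The paper forms a Lagrangian with just the normalization constraint, sets the derivative to zero, and substitutes the PCMI afterwards. It never argues that this critical point is the global minimizer, or that the nonnegativity constraints are inactive.

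You substitute the PCMI first and complete the objective into $(1+\beta)\bigl[\KL(\pi\|\tilde\pi)-\log Z\bigr]$. Gibbs' inequality then gives existence, uniqueness and global optimality at once, together with the optimal value $-(1+\beta)\log Z$. Your Lagrangian cross-check, with the strict-convexity remark, is exactly the paper's argument with that missing justification supplied. The paper's version is shorter and parallels its proof of Proposition~\ref{prop:rl-optimum}; yours is the more complete one.

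Two small points:
\begin{itemize}
\item ``Dividing by $1+\beta$'' should read ``factoring out $1+\beta$''; the displayed expression itself is right.
\item For $\beta\ge 0$ the condition $Z<\infty$ needs no extra hypothesis. Jensen gives $\exp(\E_c[\log q_c(y)])\le\E_c[q_c(y)]$, and weighted AM--GM then yields $Z\le\sum_y\pi_0(y)^{\beta/(1+\beta)}\bigl(\E_c[q_c(y)]\bigr)^{1/(1+\beta)}\le 1$.
\end{itemize}
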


\begin{proof}
Fix $x$ and suppress it in the notation. Expanding \eqref{eq:sdkl-objective-pcmi-view} gives
\[
\min_{\pi}
\E_{y^{\mathrm{corr}}}
\left[
\sum_y \pi(y) \log \frac{\pi(y)}{\pi_0(y \given y^{\mathrm{corr}})}
\right]
+
\beta \sum_y \pi(y) \log \frac{\pi(y)}{\pi_0(y)}
\quad\text{subject to}\quad
\sum_y \pi(y)=1.
\]
Interchanging expectation and summation,
\[
\min_{\pi}
(1+\beta) \sum_y \pi(y) \log \pi(y)
- \sum_y \pi(y) \E_{y^{\mathrm{corr}}}[\log \pi_0(y \given y^{\mathrm{corr}})]
- \beta \sum_y \pi(y) \log \pi_0(y).
\]
The Lagrangian is therefore
\[
\mathcal{L}(\pi,\lambda)
=
(1+\beta) \sum_y \pi(y) \log \pi(y)
- \sum_y \pi(y) \E_{y^{\mathrm{corr}}}[\log \pi_0(y \given y^{\mathrm{corr}})]
- \beta \sum_y \pi(y) \log \pi_0(y)
+ \lambda \left(\sum_y \pi(y)-1\right).
\]
Differentiating with respect to $\pi(y)$ gives
\[
(1+\beta)(\log \pi(y)+1)
- \E_{y^{\mathrm{corr}}}[\log \pi_0(y \given y^{\mathrm{corr}})]
- \beta \log \pi_0(y)
+ \lambda = 0.
\]
Hence,
\[
\log \pi(y)
=
\frac{1}{1+\beta}
\E_{y^{\mathrm{corr}}}[\log \pi_0(y \given y^{\mathrm{corr}})]
+
\frac{\beta}{1+\beta} \log \pi_0(y)
+ c.
\]
Using \eqref{eq:pcmi-fixed},
\[
\log \pi_0(y \given y^{\mathrm{corr}})
=
\log \pi_0(y)
+
 i(y; y^{\mathrm{corr}} ).
\]
Therefore, since taking the expectation over $y^{\mathrm{corr}}$ does not influence the first term: 
\[
\E_{y^{\mathrm{corr}}}[\log \pi_0(y \given y^{\mathrm{corr}})]
=
\log \pi_0(y)
+
\E_{y^{\mathrm{corr}}}
\big[
 i(y; y^{\mathrm{corr}} )
\big].
\]
Substituting back,
\[
\log \pi(y)
=
\log \pi_0(y)
+
\frac{1}{1+\beta}
\E_{y^{\mathrm{corr}}}
\big[
 i(y; y^{\mathrm{corr}})
\big]
+ c.
\]
Exponentiating and normalizing yields:
\begin{equation}
\pi(y)
\pto
\pi_0(y)
\exp\!\left(
\frac{1}{1+\beta}
\, \E_{y^{\mathrm{corr}} }
\big[
 i(y; y^{\mathrm{corr}})
\big]
\right).
\end{equation}
Restoring $x$ to the notation gives \eqref{eq:sdkl-optimum-pcmi}.
\end{proof}

\begin{remark}[Interpretation of SDSD-KL]
\label{rem:sdkl-interpretation}
SDSD-KL has the same formal structure as KL-regularized RL: it exponentially tilts the base policy. The effective reward is now the expected PCMI,
\[
\E_{y^{\mathrm{corr}} \sim p_{\mathrm{corr}}(\cdot \given x)}
\big[
 i(y; y^{\mathrm{corr}} \mid x)
\big],
\]
which measures how strongly, on average over correct demonstrations, the base policy shifts toward the candidate sequence $y$.
\end{remark}

\begin{remark}[Ratio for two correct sequences under SDSD-KL]
\label{rem:sdkl-ratio}
Let $y_1$ and $y_2$ be two correct sequences for the same input $x$, and suppose
\[
\pi_0(y_1 \given x) = k \, \pi_0(y_2 \given x)
\]
for some $k>0$. Then, by \eqref{eq:sdkl-optimum-pcmi}, their ratio under the optimal SDSD-KL policy is
\begin{equation}
\label{eq:sdkl-ratio}
\frac{\pi^*_{\mathrm{SD\text{-}KL}}(y_1 \given x)}{\pi^*_{\mathrm{SD\text{-}KL}}(y_2 \given x)}
=
 k \, \exp\!\left(
\frac{1}{1+\beta}
\E_{y^{\mathrm{corr}} \sim p_{\mathrm{corr}}(\cdot \given x)}
\big[
 i(y_1; y^{\mathrm{corr}} \mid x) - i(y_2; y^{\mathrm{corr}} \mid x)
\big]
\right).
\end{equation}
Equivalently,
\[
\frac{\pi^*_{\mathrm{SD\text{-}KL}}(y_1 \given x)}{\pi^*_{\mathrm{SD\text{-}KL}}(y_2 \given x)}
=
 k \, \exp\!\left(
\frac{1}{1+\beta}
\E_{y^{\mathrm{corr}} \sim p_{\mathrm{corr}}(\cdot \given x)}
\left[
\log\left(
\frac{\exp(i(y_1; y^{\mathrm{corr}} \mid x))}{\exp(i(y_2; y^{\mathrm{corr}} \mid x))}
\right)
\right]
\right).
\]
Thus SDSD-KL preserves the base-policy ratio $k$ only when the two sequences have the same expected PCMI. Otherwise, SDSD-KL further reweights them according to the gap in how strongly correct demonstrations support $y_1$ versus $y_2$.
\end{remark}


\subsection{Optimal Policy for Token-Level SDSD-KL}
\label{app:token_level}
Previously, we discussed the sequence level objective and its implications. This was done for ease of presentation, but in practice, we use a token-level objective. We will see that exactly the same derivations carry in the token-level case, and we will discuss the implications.

Consider the optimization problem at a single generation step $t$. The policy generates the next token $y_t$ from the vocabulary, conditioned on the input $x$ and the generated prefix $y_{<t}$. The token-level SDSD-KL objective averages the distillation loss over demonstrations at the next-token distribution:

\begin{equation}
\min_{\pi} \mathbb{E}_{y^{corr}\sim p_{corr}(\cdot|x)} \left[ \text{KL}(\pi(\cdot|x, y_{<t}) || \pi_0(\cdot|x, y_{<t}, y^{corr})) \right] + \beta \text{KL}(\pi(\cdot|x, y_{<t}) || \pi_0(\cdot|x, y_{<t}))
\end{equation}

\begin{proposition}[Optimal policy for token-level SDSD-KL]
The optimizer of the token-level objective is:
\begin{equation}
\pi_{SD-KL}^*(y_t|x, y_{<t}) \propto \pi_0(y_t|x, y_{<t}) \exp\left(\frac{1}{1+\beta}\mathbb{E}_{y^{corr}\sim p_{corr}(\cdot|x)}[i(y_t; y^{corr}|x, y_{<t})]\right)
\end{equation}
where the token-level pointwise conditional mutual information (PCMI) is defined as:
\begin{equation}
i(y_t; y^{corr}|x, y_{<t}) := \log\frac{\pi_0(y_t|x, y_{<t}, y^{corr})}{\pi_0(y_t|x, y_{<t})}
\end{equation}
\end{proposition}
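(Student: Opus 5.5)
The plan is to reduce the token-level statement to the sequence-level Proposition~\ref{prop:sdkl-optimum}. The token-level objective, for a fixed input $x$ and fixed prefix $y_{<t}$, is an optimization over a single categorical distribution $\pi(\cdot \given x, y_{<t})$ on the vocabulary. It has exactly the form of \eqref{eq:sdkl-objective-pcmi-view}, with the following substitutions:
\begin{itemize}
\item the "sequence" $y$ is replaced by the token $y_t$;
\item the conditioning context $x$ is replaced by $(x, y_{<t})$;
\item the teacher $\pi_0(\cdot \given x, y^{\mathrm{corr}})$ is replaced by $\pi_0(\cdot \given x, y_{<t}, y^{\mathrm{corr}})$.
\end{itemize}
The demonstration distribution $p_{\mathrm{corr}}(\cdot \given x)$ does not depend on $\pi$ or on $y_t$. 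It therefore plays the same role as before: a fixed mixing measure over teachers.

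Fixing $x$ and $y_{<t}$ and suppressing them, I would expand the objective to
\[
(1+\beta)\sum_{y_t}\pi(y_t)\log\pi(y_t) \;-\; \sum_{y_t}\pi(y_t)\,\E_{y^{\mathrm{corr}}}\big[\log\pi_0(y_t\given y^{\mathrm{corr}})\big] \;-\; \beta\sum_{y_t}\pi(y_t)\log\pi_0(y_t).
\]
This step exchanges the finite sum over the vocabulary with the expectation over $y^{\mathrm{corr}}$. I would then add a Lagrange multiplier for $\sum_{y_t}\pi(y_t)=1$ and set the derivative with respect to each $\pi(y_t)$ to zero. This gives
\[
\log\pi(y_t) = \tfrac{1}{1+\beta}\,\E_{y^{\mathrm{corr}}}\big[\log\pi_0(y_t\given y^{\mathrm{corr}})\big] + \tfrac{\beta}{1+\beta}\log\pi_0(y_t) + c .
\]
Next, I would substitute $\log\pi_0(y_t\given x,y_{<t},y^{\mathrm{corr}}) = \log\pi_0(y_t\given x,y_{<t}) + i(y_t;y^{\mathrm{corr}}\given x,y_{<t})$. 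The $\log\pi_0(y_t)$ terms then combine with total coefficient $1$, and exponentiating and normalizing gives the claimed tilt.

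To make the stationary point a genuine minimizer, I would note that the objective is $(1+\beta)$ times negative entropy plus a linear functional in $\pi$. It is therefore strictly convex on the simplex, since $1+\beta>0$ whenever $\beta\ge 0$. The stationary point is strictly positive because all distributions are assumed strictly positive, so it lies in the interior of the simplex. A convex function attains its minimum at any interior stationary point, so this point is the unique minimizer. Equivalently, I could rewrite the objective as $(1+\beta)\,\KL(\pi\,\|\,\pi^*)$ plus a constant, which is minimized exactly at $\pi=\pi^*$. This route avoids calculus entirely.

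The only real subtlety is confirming that the per-prefix problems decouple. That is, the statement is about minimizing the objective separately at each $(x,y_{<t})$, not about a sum over prefixes weighted by on-policy visitation. Under that reading, which is the one the statement adopts, the visitation weights are irrelevant to the pointwise optimum, and the argument above is essentially a verbatim transcription of the proof of Proposition~\ref{prop:sdkl-optimum}. I would also observe that the demonstration expectation sits outside each per-token KL and does not depend on $y_t$, so no extra coupling arises from it.
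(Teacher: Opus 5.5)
Your proposal is correct and follows the paper's approach: the paper's proof is just the remark that it repeats the sequence-level Lagrangian derivation of Proposition~\ref{prop:sdkl-optimum} with the context $(x,y_{<t})$ in place of $x$. Your extra sufficiency step, via strict convexity or the rewrite as $(1+\beta)\,\KL(\pi\,\|\,\pi^*)$ plus a constant, is something the paper leaves implicit; it confirms that the stationary point is the unique minimizer.
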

The proof follows exactly the same steps as before.

\paragraph{Implications of the token-level objective.}
\label{app:token_level_implications}
In the case of sequence-level objective, we have seen that there is a bias for common rollouts. That bias comes from the alignment between rollouts and demonstrations and the preference of the teacher. A similar phenomenon is happening at the token-level. Some tokens are more aligned to the context determined by the demonstration and the previous tokens of the student rollout. Again, next-tokens that will move the current rollout into a novel or uncommon direction might be less aligned to the context. Moreover, given the same context, some next-tokens are preferred over others by the teacher.

On top of this, differently from the sequence level objective, here the teacher is myopic to the full student rollout. It guides the next token without taking into account the relation to future tokens. Thus, some commonalities between student rollout and demonstrations might only be seen at a sequence level and could be harder to establish at intermediate token positions. Thus, the alignment at intermediate token positions might be low, causing learning to be even more biased.
This leads to a bias for common next-tokens, which turns into common entire distributions as we discussed before.

\subsection{Concept Graph Task: Additional details}
\label{app:concept_graph}

\newtcblisting{codeblock}{
  listing only,
  breakable,
  colback=gray!5,
  colframe=gray!40,
  arc=2mm,
  boxrule=0.4pt,
  left=6pt,
  right=6pt,
  top=6pt,
  bottom=6pt,
  listing options={
    basicstyle=\ttfamily\small,
    breaklines=true
  }
}
In the Concept Graph task, each query is a problem of finding a path given to the LLM. Each query has a different graph with randomly sampled structure and node names. Here is an example:

\begin{codeblock}                                   
  You are given the following graph structure.
  Nodes:                                                                                                                                                                                                                                            
    0   start           [hub]                               
    1   triangle        [shape]                                                                                                                                                                                                                     
    2   pigeon          [concept=birds]                                                                                                                                                                                                             
    3   parrot          [concept=birds]                                                                                                                                                                                                             
    4   sparrow         [concept=birds]                                                                                                                                                                                                             
    5   crow            [concept=birds]                     
    6   heron           [concept=birds]
    7   eagle           [concept=birds]
    8   square          [shape]                                                                                                                                                                                                                     
    9   tuna            [concept=fish]                                                                                                                                                                                                           
    ...                                                     
  Edges:
    0 (start) -- 2 (pigeon)
    0 (start) -- 5 (crow)
    0 (start) -- 9 (tuna)  
    0 (start) -- 21 (plum)
    ...                                                                                                            
    2 (pigeon) -- 3 (parrot)                                     
    3 (parrot) -- 4 (sparrow)                               
    1 (triangle) -- 4 (sparrow)                                                              
    ...
    21 (plum) -- 22 (mango)  
    22 (mango) -- 27 (triangle)  
    ...
  Your task is to generate a path from the start node to the target node named triangle. For the output, print the names of the nodes in the path, use the following format: A path from start to triangle is \boxed{start, node_1_name, ..., triangle} for example \boxed{start, Jacobi, Hamilton, ..., star}.        
\end{codeblock}

A correct response following the \textit{birds} concept chain would be:                                                                                                                                                                 
  \begin{quote}
  \small                            
  \verb|A path from start to triangle is \boxed{start, pigeon, parrot, sparrow, triangle}|                                                                                      
  \end{quote}

  \noindent An equally valid alternative following a different concept (\textit{fruits}) would be:                                                                        
  \begin{quote}
  \small                                 
  \verb|\boxed{start, plum, mango, triangle}|
  \end{quote}                                                                                                       
  Both receive the same maximum reward, but a model that \emph{only} produces bird-paths across different graphs exhibits lower solution diversity than one that explores both bird and fruit paths.

\end{document}